\documentclass{article} 
\usepackage{iclr2023_conference,times}
\iclrfinalcopy 


\usepackage{amsmath,amsfonts,bm}









\def\eqref#1{equation~\ref{#1}}









\def\1{\bm{1}}










\DeclareMathAlphabet{\mathsfit}{\encodingdefault}{\sfdefault}{m}{sl}
\SetMathAlphabet{\mathsfit}{bold}{\encodingdefault}{\sfdefault}{bx}{n}













\usepackage{booktabs}
\usepackage{multirow}
\usepackage{hyperref}
\usepackage{url}
\usepackage{wrapfig}
\usepackage{graphicx}
\usepackage{amsmath, amsthm, amssymb}
\usepackage{thmtools, thm-restate}

\usepackage{subfig}
\usepackage[export]{adjustbox}

\usepackage{siunitx}
\usepackage{arydshln}

\usepackage{etoolbox}
\usepackage{tabularx}

\usepackage{tikz}
\newcommand*\circled[1]{\tikz[baseline=(char.base)]{
            \node[shape=circle,draw,inner sep=1pt, line width=0.2mm] (char) {#1};}}

\usepackage{stackengine}

\title{Tailoring Language Generation Models under Total Variation Distance}


\author{Haozhe Ji$^1$, Pei Ke$^1$, Zhipeng Hu$^2$, Rongsheng Zhang$^2$, Minlie Huang$^1$\thanks{Corresponding Author.} \\
$^1$Dept. of Comp. Sci. \& Tech., State Key Lab of Intelligent Tech. \& Sys.\\
$^1$BNRist Center, Tsinghua University, Beijing 100084, China \\
$^2$Fuxi AI Lab, NetEase Inc., China \\
\texttt{\small{jhz20@mails.tsinghua.edu.cn, kepei1106@outlook.com}} \\
\texttt{\small{\{zphu, zhangrongsheng\}@corp.netease.com}, aihuang@mail.tsinghua.edu.cn}\\
}

%

\begin{document}

\maketitle

\begin{abstract}
The standard paradigm of neural language generation adopts maximum likelihood estimation (MLE) as the optimizing method. From a distributional view, MLE in fact minimizes the Kullback-Leibler divergence (KLD) between the distribution of the real data and that of the model.
However, this approach forces the model to distribute non-zero (sometimes large) probability mass to all training samples regardless of their quality. Moreover, in the attempt to cover the low-probability regions in the data distribution, the model systematically overestimates the probability of corrupted text sequences, which we conjecture is one of the main reasons for text degeneration during autoregressive decoding. To remedy this problem, we leverage the total variation distance (TVD) with its robustness to outliers, and develop practical bounds to apply it to language generation. Then, we introduce the TaiLr\footnote{Pronounced as ``tailor''.} objective 
that balances the tradeoff of estimating TVD.
Intuitively, {TaiLr} downweights real data samples that have low model probabilities with tunable penalization intensity.
%
Experimental results show that our method alleviates the overestimation of degenerated sequences without sacrificing diversity and improves generation quality on a wide range of text generation tasks.\footnote{Code is available at \url{https://github.com/thu-coai/TaiLr}.}
\end{abstract}

\section{Introduction}

The dominant approach to train language generation models is to maximize the likelihood of text samples in training data. 
With the development of pre-training techniques, the quality of texts generated by current models has been improved by a large margin \citep{radford2019gpt2,brown2020gpt3}. However, the text degeneration phenomena, e.g., repetitions \citep{holtzman2020topp,rep}, incoherence~\citep{guan-etal-2021-long,DBLP:conf/emnlp/JiH21}, and other ill-formed generation results sampled from the noisy long tail~\citep{DBLP:conf/acl/DouFKSC22, distortion}, are still widely observed in large pre-trained models. These results indicate that using MLE as the optimizing 
method has theoretical limitations that are hard to be compensated by increasing the model size.

Given 
the real data distribution $p(\boldsymbol{x})$ and the model distribution $q(\boldsymbol{x})$ defined by a learned generation model, we can view MLE as minimizing the KLD between $p(\boldsymbol{x})$ and $q(\boldsymbol{x})$. However, minimizing $D_{\text{KL}}(p, q)$ will lead to a \textit{zero-avoiding} solution of $q(\boldsymbol{x})$ that spreads itself to cover all the modes in the real data~\citep{minka2005divergence,malinin2019reverse}. As the model is forced to take into account all the modes regardless of their quality and saliency, this behavior could deteriorate the overall generation quality when (i) the data inherently exhibits too many variations
, e.g., in open-ended generation, the model often over-presents unrelated words in the unreliable long tail of its distribution~\citep{holtzman2020topp}.
(ii) the data contains flawed or noisy references, e.g., hallucination and missing contents in text summarization~\citep{zhao-etal-2020-reducing} degrade the generation quality of the model.

In language generation, the attempt to cover all the non-zero probability regions in the data distribution would lead to
a problem directly related to text degeneration, which we term as \textit{data void overestimation}. Concretely, the model assigns considerably more probability mass than it should to the \textit{void} of the real data distribution, where degenerated text sequences lie. An intuitive illustration is shown in Figure \ref{fig:toy_tv} where KLD pushes the model to place large mass on the zero-probability region of the target distribution to cover the minor mass portion on the right. These degenerated texts include random word sequences and partially corrupted texts that have high lexical overlap with the real texts. Therefore, during free-run generation, the model is likely to trap into the void regions and produce ``over-generalized'' text samples that are unlike the training data~\citep{DBLP:journals/corr/Huszar15}. 

In this work, we start with a robust alternative to KL divergence, i.e., the total variation distance (TVD). TVD is known to be robust to outliers in the data~\citep{beran1977minimum, Knoblauch2020RobustBI}, as it measures the absolute difference between two probability distributions averaging at each point. In \S{\ref{sec:grad}}, we show that TVD allows the model to place zero probability to low-quality training samples and prevent overestimation of the data void region through gradient analysis. Though appealing, TVD cannot be directly applied to text generation because (i) TVD measures the distance at the sequence level while we desire a token-level criterion for autoregressive generation models, (ii) we only have samples from the data distribution, whereas calculating TVD demands the real data probability $p(\boldsymbol{x})$ of the training sample $\boldsymbol{x}$. We overcome these two issues by (i) developing an upper bound on the sequence-level TVD with its token-level factorization (\S{\ref{sec:token}}), and (ii) introducing a proxy distribution (\S{\ref{sec:proxy}}) that handles the bias-variance tradeoff during estimating TVD (\S{\ref{sec:bias_var}}). Finally, we derive the \textbf{T}otal V\textbf{a}riation Gu\textbf{i}ded \textbf{L}anguage Gene\textbf{r}ation (TaiLr) objective by leveraging access to the non-zero gradient of TVD to guide the model. 
Intuitively, {TaiLr} weights the log-likelihood of a text sequence at each position according to the model probability and uses a tunable hyperparameter to control the penalization intensity.

We first conduct experiments on synthetic data 
to show that {TaiLr} achieves better generation quality without sacrificing diversity and reduces the overestimation of degenerated texts compared to MLE.
Further experiments on real data demonstrate that the proposed method outperforms existing methods that modify MLE at different aspects
on a wide range of language generation tasks, including machine translation, text summarization, 
and long text generation.

\section{Background and Motivation}

We consider natural language generation tasks where a conditional generation model $p_\theta(\boldsymbol{y}|\boldsymbol{x})$ parametrized by $\theta$ is required to generate the target text sequence $\boldsymbol{y}=(y_1, \cdots,y_T)$ given the context $\boldsymbol{x}$. Let $p_{o}(\boldsymbol{y}|\boldsymbol{x})$ denote the real data distribution, MLE training is equivalent to minimizing the KL divergence between $p_{o}$ and $p_\theta$:
\begin{equation}\label{eq:kld}
    D_{\text{KL}}(p_{o}, p_\theta)=-\mathbb{E}_{\boldsymbol{y}\sim p_{o}}\Bigg[\sum_{t=1}^T \log p_\theta(y_{t}|\boldsymbol{y}_{<t}, \boldsymbol{x})\Bigg] - H(p_{o}),
\end{equation}
where the generation probability is factorized into the product of conditional token probabilities given the prefix $\boldsymbol{y}_{<t}$ and the context $\boldsymbol{x}$: $p_\theta(\boldsymbol{y}|\boldsymbol{x})=\prod_{t=1}^T p_\theta(y_t|\boldsymbol{y}_{<t},\boldsymbol{x})$. The first term pushes the model to minimize the negative log-likelihood (NLL) of the training data.
The second term is a constant with respect to $\theta$ and therefore is commonly ignored in MLE.

Despite its simplicity and practical benefits for optimization, MLE is known to suffer from a mismatch to the evaluation metric~\citep{demonstration} and brittleness to noise in the training data~\citep{loss_trunc}. Motivated by the literature in probability metrics, we draw attention to total variation distance (TVD) as a naturally robust alternative to KLD. We present the definitions of TVD~\citep{Handel2014ProbabilityIH} between 
the data distribution $p_{o}$ and the model distribution $p_\theta$ given the context $\boldsymbol{x}$:
\begin{subequations}
\begin{align}
  D_{\text{TV}}(p_{o}, p_\theta)&= \frac{1}{2}\sum_{\boldsymbol{y}\in \mathcal{Y}}\bigg|p_{o}(\boldsymbol{y}|\boldsymbol{x}) - p_\theta(\boldsymbol{y}|\boldsymbol{x})\bigg| \label{eq:tvd_abs} \\
  &=1 - \sum_{\boldsymbol{y}\in \mathcal{Y}} \min\bigg(p_{o}(\boldsymbol{y}|\boldsymbol{x}), p_\theta(\boldsymbol{y}|\boldsymbol{x})\bigg) \label{eq:tvd_min},
\end{align}
\end{subequations}
where $\mathcal{Y}$ is the space of all possible text sequences. Intuitively, TVD measures the average of the absolute difference between $p_{o}(\boldsymbol{y}|\boldsymbol{x})$ and $p_\theta(\boldsymbol{y}|\boldsymbol{x})$ on all possible text sequence $\boldsymbol{y}\in \mathcal{Y}$. Therefore the model learns to properly allocate its probability mass to best describe the major part of the data distribution and ignore outliers. TVD is also correlated with the \textit{distinguishability} of samples generated by the model, which is shown to be a balanced criterion that takes both quality and diversity into account~\citep{DBLP:conf/naacl/HashimotoZL19}. Existing work proposed to optimize distinguishability in an generative adversarial manner~\citep{DBLP:journals/corr/Goodfellow14,gan_falls} while \citet{loss_trunc} argued that minimizing its heuristic surrogate via loss truncation is better in practice. 
Additional related work is provided in Appendix \ref{appendix:related_work}. In this work, we first analyze the property of TVD and seek to directly minimize TVD or at least its upper bound 
in the task of natural language generation.

\subsection{A Toy Experiment and Its Implications}\label{sec:why_tv}

\begin{wrapfigure}{r}{0.35\linewidth}
\centering
\includegraphics[width=\linewidth]{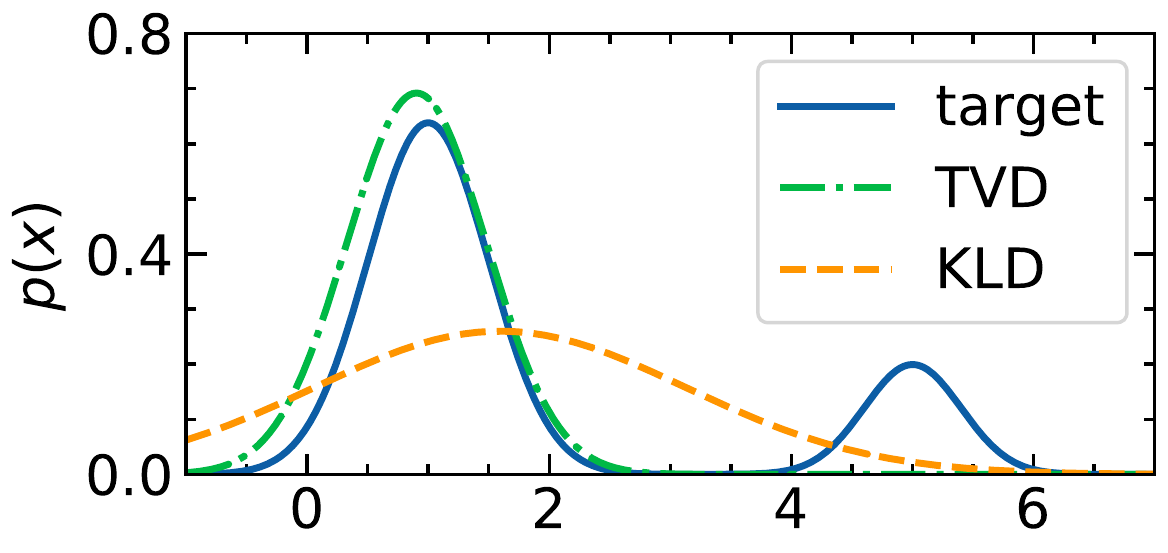}
\caption{Results of the toy experiment: KLD is sensitive to outliers while TVD is more robust.}
\label{fig:toy_tv}
\vspace{-5pt}
\end{wrapfigure}
We first present a toy experiment to illustrate the behavioral difference of KLD and TVD when countering imperfect data, where 
a single Gaussian model is required to fit a mixture of two Gaussians.
As shown in Figure \ref{fig:toy_tv}, minimizing KLD forces the model to learn a flat distribution that spans itself to cover all the non-zero probability regions, which causes underfitting of the major part of the target probability mass. Furthermore, the model places considerably high probability mass to the \textit{void region} in the target distribution which does not correspond to real samples.
On the other hand, TVD focuses on the major target mass without overestimating degenerated samples that are unlikely under the target distribution. 

In language generation, this scenario is realistic and pervasive. For many language geneartion tasks, it is hard to circumvent noisy or invalid references during the data collection process, e.g., hallucination in text summarization~\citep{zhao-etal-2020-reducing} and image captioning~\citep{xiao-wang-2021-hallucination}. For applications like open-ended generation, existing autoregressive models pre-trained on large corpus 
are still reported to over-present the artifacts in the noisy long tail~\citep{holtzman2020topp}.

\subsection{Gradient Analysis}
\label{sec:grad}

To better understand the reason behind the behaviorial difference of KLD and TVD in optimization, we analyze their gradients with respect to the model parameter $\theta$. 
Given a context-target text pair $(\boldsymbol{x}^*, \boldsymbol{y}^*)$ sampled from the data distribution $p_o$, we approximate the gradient of KLD with respect to $\theta$ using Monte-Carlo sampling\footnote{For clarity, we only use one sample per batch in this analysis, and the result still holds for large batch size.}:
\begin{equation}\label{eq:grad_kld}
    \nabla_\theta D_{\text{KL}}(p_{o}, p_\theta) 
    \approx-p_\theta(\boldsymbol{y}^*|\boldsymbol{x}^*)^{-1}\nabla_\theta p_\theta(\boldsymbol{y}^*|\boldsymbol{x}^*).
\end{equation}
The result is the negative gradient of the model probability weighted by the reciprocal of the model probability on this sample. Intuitively, when a low-quality context-target pair is sampled, 
the model will be affected by this sample and shift the distribution towards it. If $p_\theta(\boldsymbol{y}^*|\boldsymbol{x}^*)\approx 0$, the norm of the gradient will become very large, which leads to a huge step of parameter update towards that noisy direction. This explains the phenomena illustrated in \S{\ref{sec:why_tv}}, where KLD pushes the model to cover all the training samples resulting in an unfocused and flat distribution.

For comparison, we calculate the gradient of TVD with respect to $\theta$ using equation (\ref{eq:tvd_min}). The derivation details are provided in the Appendix \ref{appendix:grad_tvd}.
\begin{equation}
    \label{eq:grad_tvd}
    \nabla_\theta D_{\text{TV}}(p_{o}, p_\theta)
    \approx\begin{cases}
    -p_{o}(\boldsymbol{y}^*|\boldsymbol{x}^*)^{-1} \nabla_\theta p_\theta(\boldsymbol{y}^*|\boldsymbol{x}^*), & p_\theta(\boldsymbol{y}^*|\boldsymbol{x}^*) < p_{o}(\boldsymbol{y}^*|\boldsymbol{x}^*)\\
    0, & p_\theta(\boldsymbol{y}^*|\boldsymbol{x}^*) \ge p_{o}(\boldsymbol{y}^*|\boldsymbol{x}^*), \\
    \end{cases}
\end{equation}
where the result switches between a non-zero gradient term and 0 by comparing the model probability and the real data probability. When the model probability exceeds the real data probability (\textbf{overestimation}), the gradient becomes 0 to prevent the model from fitting dubious data points. When the model predicts a probability lower than the real probability of the sample (\textbf{underestimation}), the weight is the reciprocal of the real probability of the sample, which has a smaller norm than equation (\ref{eq:grad_kld}). 
This means that the update towards noisy directions is more conservative, and the model is allowed to assign 0 probability to those low-quality training samples. 

\section{Methodology}

Despite the attractive attribute of TVD, we still face several challenges to apply TVD to natural language generation. First, TVD measures the difference of the sequence-level probability. For autoregressive language generation models, it is typical to use a token-level criterion to supervise the factorized model probability.
Although the sequence-level objective can also be adopted as a reward function using policy gradient~\citep{DBLP:journals/ml/Williams92, DBLP:conf/nips/SuttonMSM99}, this approach is shown to suffer from a high variance and sparse rewards. 
Second, calculating TVD requires the real data probability $p_{o}(\boldsymbol{y}|\boldsymbol{x})$ of the sample  $\boldsymbol{y}$
to be known. One straightforward solution is to train a classifier that estimates the density ratio between $p_{o}(\boldsymbol{y}|\boldsymbol{x})$ and $p_\theta(\boldsymbol{y}|\boldsymbol{x})$ 
~\citep{DBLP:conf/aistats/SongMZYW020}. However, the density ratio estimator would introduce undetermined biases due to miscalibration~\citep{DBLP:conf/nips/GroverSKTAHE19}. 
In this work, we tackle these challenges by developing practical upper bounds on TVD, and derive a sampling-based learning criterion which can directly substitute for the MLE objective in practice.

\subsection{Token-level Factorization}
\label{sec:token}

As KLD has the nice property of factorizing the sequence-level loss into summation of the token-level loss conditioned on the prefix as illustrated in equation (\ref{eq:kld}), we wonder if TVD also has this property. We first write the autoregressive factorization of the data probability as
$p_{o}(\boldsymbol{y}|\boldsymbol{x})=\prod_{t=1}^T p_{o} (y_t|\boldsymbol{y}_{<t}, \boldsymbol{x})$. For simplicity, we use $p_{o}^{<t}(y_t)$ and $p_{\theta}^{<t}(y_t)$ to denote $p_{o} (y_t|\boldsymbol{y}_{<t}, \boldsymbol{x})$ and $p_{\theta} (y_t|\boldsymbol{y}_{<t}, \boldsymbol{x})$ respectively.
Then we have the following proposition that manifests the relationship between the sequence-level objective and its token-level factorization.
\begin{restatable}{prop}{Firstprop} \label{prop:tvd_tok}
   Given $p_{o}(\boldsymbol{y}|\boldsymbol{x})=\prod_{t=1}^T p_{o}^{<t}(y_t)$ and $p_{\theta}(\boldsymbol{y}|\boldsymbol{x})=\prod_{t=1}^T p_{\theta}^{<t}(y_t)$, then the following condition holds:
   \begin{equation}
    D_{\textnormal{TV}}(p_{o}, p_{\theta}) \le \mathbb{E}_{\boldsymbol{y}\sim p_{o}}\Bigg[\sum_{t=1}^T D_{\textnormal{TV}}(p_{o}^{<t}, p_{\theta}^{<t})\Bigg].
\end{equation}
\end{restatable}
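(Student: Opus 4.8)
The plan is to induct on the sequence length $T$, peeling off one conditional factor at a time and using the well-known fact that TVD between product distributions is subadditive. Concretely, write $p_o(\boldsymbol{y}|\boldsymbol{x}) = p_o^{<1}(y_1)\, \tilde p_o(\boldsymbol{y}_{>1})$ where $\tilde p_o(\boldsymbol{y}_{>1}) = \prod_{t=2}^T p_o^{<t}(y_t)$ is the data distribution over the suffix \emph{conditioned on the realized prefix $y_1$}, and similarly for $p_\theta$. The first step is the base case / building block: for two distributions that factor as $p_o^{<1}(y_1)\tilde p_o$ and $p_\theta^{<1}(y_1)\tilde p_\theta$, I would show
\begin{equation*}
  D_{\textnormal{TV}}(p_o^{<1}\tilde p_o,\; p_\theta^{<1}\tilde p_\theta) \;\le\; D_{\textnormal{TV}}(p_o^{<1}, p_\theta^{<1}) \;+\; \mathbb{E}_{y_1\sim p_o^{<1}}\big[D_{\textnormal{TV}}(\tilde p_o, \tilde p_\theta)\big].
\end{equation*}
This follows from the triangle inequality for the $\ell_1$ norm: insert the hybrid distribution $p_o^{<1}(y_1)\tilde p_\theta(\boldsymbol{y}_{>1})$, so that $\tfrac12\|p_o^{<1}\tilde p_o - p_\theta^{<1}\tilde p_\theta\|_1 \le \tfrac12\|p_o^{<1}\tilde p_o - p_o^{<1}\tilde p_\theta\|_1 + \tfrac12\|p_o^{<1}\tilde p_\theta - p_\theta^{<1}\tilde p_\theta\|_1$. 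The first term equals $\sum_{y_1} p_o^{<1}(y_1)\, D_{\textnormal{TV}}(\tilde p_o,\tilde p_\theta)$ after factoring $p_o^{<1}(y_1)$ out of the inner sum over suffixes, which is exactly the expectation term; the second term is $D_{\textnormal{TV}}(p_o^{<1},p_\theta^{<1})$ after summing out the suffix variables (since $\tilde p_\theta$ is a probability distribution that sums to one).

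The second step is to iterate this bound. Applying it with the roles of "first token" and "remaining suffix" repeatedly — i.e.\ expanding $D_{\textnormal{TV}}(\tilde p_o, \tilde p_\theta)$ again at position $2$, then $3$, and so on — I accumulate one term $D_{\textnormal{TV}}(p_o^{<t}, p_\theta^{<t})$ per position, each wrapped in a nested expectation over $y_1,\dots,y_{t-1}$ drawn from the data conditionals. Because the nested conditional expectations $\mathbb{E}_{y_1\sim p_o^{<1}}\mathbb{E}_{y_2\sim p_o^{<2}}\cdots$ compose exactly into the single expectation $\mathbb{E}_{\boldsymbol{y}\sim p_o}$ (by the chain rule / tower property, using that $p_o$ factorizes as the product of its conditionals), the sum telescopes into $\mathbb{E}_{\boldsymbol{y}\sim p_o}\big[\sum_{t=1}^T D_{\textnormal{TV}}(p_o^{<t},p_\theta^{<t})\big]$, which is the claimed right-hand side. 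Cleanest is to phrase this as an induction on $T$: the inductive hypothesis gives the bound for the length-$(T-1)$ suffix distributions (for every fixed $y_1$), and the building block from step one glues it to the $t=1$ term.

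The main obstacle — really the only subtlety — is bookkeeping of which distribution the suffix conditionals are taken with respect to, and making sure the expectations are over the \emph{data} conditionals $p_o^{<t}$ and not the model ones. The key point that makes it go through is that in the hybrid-distribution step, the prefix factor that gets pulled out of the $\ell_1$ sum is always $p_o^{<1}$ (we insert $p_o^{<1}\tilde p_\theta$, not $p_\theta^{<1}\tilde p_o$), so every accumulated expectation is with respect to $p_o$; choosing the other hybrid would introduce model-distributed prefixes and break the clean telescoping. One should also note the bound is stated for fixed sequence length $T$ (or, if $T$ is random, one restricts to sequences of a common length or pads with an end-of-sequence symbol), so no measure-theoretic issues with varying-length $\mathcal{Y}$ arise. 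Everything else is routine manipulation of finite sums.
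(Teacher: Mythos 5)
Your proof is correct and is essentially the paper's own argument: the paper's telescoping inequality $|c_T - c_0| \le \sum_{t=1}^T |c_t - c_{t-1}|$ with $c_t = (a_1\cdots a_t)(b_{t+1}\cdots b_T)$ is exactly your hybrid-insertion step unrolled over all positions at once, yielding the same intermediate terms $\big(\prod_{i<t} p_o^{<i}(y_i)\big)\,\big|p_o^{<t}(y_t)-p_\theta^{<t}(y_t)\big|\,\big(\prod_{j>t} p_\theta^{<j}(y_j)\big)$ with data conditionals on the prefix and model conditionals on the suffix. Phrasing it as an induction on $T$ rather than a one-shot summation is purely presentational.
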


The condition follows from applying triangle inequality~\citep{triangle_inequality} to the right hand side of equation (\ref{eq:tvd_abs}). The complete proof is provided in Appendix \ref{appendix:tvd_tok}. This result indicates that minimizing the expected sum of the TVD on token-level probabilities is equivalent to minimizing the upper bound of the TVD on their products where the bound becomes tight as $p_\theta$ approaches $p_{o}$. Therefore, we are guaranteed to train the model using the MLE fashion that calculates the loss at each position given the prefix of the target sequence. 

\subsection{Estimation with Proxy Distribution}
\label{sec:proxy}

Another difficulty of directly applying TVD to train language generation model is the explicit demand of the data probability distribution $p_{o}$ while we only have a finite number of samples drawn from it. In contrast to using an additional density ratio estimation model that is both hard to train and potentially biased with undetermined deviation, we try to estimate the target using a \textit{proxy probability distribution} and analyze the estimation error.

We start by considering the one-hot distribution ${e}^{(w)}$ where only the $w$-th index is $1$ and others are $0$. $w$ is the target token sampled from the conditional oracle probability $p_{o}^{<t}(\cdot)$. 
It is easy to see that the expectation of the one-hot distribution is exactly the oracle probability distribution: 
\begin{equation}
    \label{eq:unbias}
    \mathbb{E}_{w\sim p_{o}^{<t}}\Big[{e}^{(w)}\Big] = p_{o}^{<t}.
\end{equation}
Then we use ${e}^{(w)}$ to substitute the oracle probability $p_{o}^{<t}$ in TVD and present the following proposition which states that the expectation of this estimation serves as an upper bound of the original TVD between the oracle distribution and the model distribution.
\begin{restatable}{prop}{Secondprop} \label{prop:one_hot}
Given $w\sim p_{o}^{<t}$ and the one-hot distribution ${e}^{(w)}$, then the following condition holds:
\begin{equation}
    D_{\textnormal{TV}}(p_{o}^{<t}, p_\theta^{<t})\le 
    \mathbb{E}_{w\sim p_{o}^{<t}}\bigg[D_{\textnormal{TV}}({e}^{(w)}, p_\theta^{<t})\bigg].
\end{equation}
\end{restatable}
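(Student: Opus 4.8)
The plan is to read the claimed inequality as an instance of Jensen's inequality, using the fact that total variation distance is convex in its first argument together with the unbiasedness identity in equation~(\ref{eq:unbias}).

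First I would fix the second argument and consider the map $\mu \mapsto D_{\text{TV}}(\mu, p_\theta^{<t}) = \tfrac12\sum_{v} |\mu(v) - p_\theta^{<t}(v)|$ on the probability simplex over the vocabulary. Each summand $\mu \mapsto |\mu(v) - p_\theta^{<t}(v)|$ is the composition of an affine map with the absolute value, hence convex, and a sum of convex functions is convex; so $\mu \mapsto D_{\text{TV}}(\mu, p_\theta^{<t})$ is convex. Concretely, for distributions $\mu_1,\dots,\mu_k$ and weights $\alpha_1,\dots,\alpha_k \ge 0$ with $\sum_i \alpha_i = 1$, the triangle inequality applied coordinatewise gives $\bigl|\sum_i \alpha_i \mu_i(v) - p_\theta^{<t}(v)\bigr| \le \sum_i \alpha_i |\mu_i(v) - p_\theta^{<t}(v)|$, and summing over $v$ yields $D_{\text{TV}}\bigl(\sum_i \alpha_i \mu_i, p_\theta^{<t}\bigr) \le \sum_i \alpha_i D_{\text{TV}}(\mu_i, p_\theta^{<t})$. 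This finite-combination form is all that is needed, since the random distribution $e^{(w)}$ ranges over the (finite) vocabulary as $w$ varies.

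Next I would apply this with the convex weights $\alpha_w = p_o^{<t}(w)$ and the points $e^{(w)}$. By equation~(\ref{eq:unbias}) we have $\mathbb{E}_{w\sim p_o^{<t}}[e^{(w)}] = \sum_w p_o^{<t}(w)\, e^{(w)} = p_o^{<t}$, so the convexity inequality reads
\[
  D_{\text{TV}}(p_o^{<t}, p_\theta^{<t}) = D_{\text{TV}}\!\Bigl(\mathbb{E}_{w\sim p_o^{<t}}[e^{(w)}],\, p_\theta^{<t}\Bigr) \le \mathbb{E}_{w\sim p_o^{<t}}\bigl[D_{\text{TV}}(e^{(w)}, p_\theta^{<t})\bigr],
\]
which is exactly the statement.

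I do not expect a real obstacle here; the only care needed is to state the convexity of $D_{\text{TV}}(\cdot, p_\theta^{<t})$ cleanly from the triangle inequality and to invoke equation~(\ref{eq:unbias}) correctly. As a cross-check (and an alternative proof avoiding the word ``convexity''), one can compute directly: since $\tfrac12\sum_v |e^{(w)}(v) - p_\theta^{<t}(v)| = 1 - p_\theta^{<t}(w)$, the right-hand side equals $1 - \sum_w p_o^{<t}(w)\, p_\theta^{<t}(w)$, while by the analogue of equation~(\ref{eq:tvd_min}) the left-hand side equals $1 - \sum_w \min\bigl(p_o^{<t}(w), p_\theta^{<t}(w)\bigr)$; the inequality then reduces to the termwise bound $p_o^{<t}(w)\, p_\theta^{<t}(w) \le \min\bigl(p_o^{<t}(w), p_\theta^{<t}(w)\bigr)$, which holds because both factors lie in $[0,1]$.
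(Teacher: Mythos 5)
Your main argument is essentially the paper's own proof: the paper likewise writes $p_o^{<t}$ as $\mathbb{E}_{w\sim p_o^{<t}}[e^{(w)}]$ and applies Jensen's inequality coordinatewise to the convex absolute-value function before summing over the vocabulary, which is exactly your ``convexity of $D_{\textnormal{TV}}(\cdot,p_\theta^{<t})$ via the triangle inequality'' step. Your closing cross-check (reducing the claim to the termwise bound $p_o^{<t}(w)\,p_\theta^{<t}(w)\le \min(p_o^{<t}(w),p_\theta^{<t}(w))$ via the $\min$ form of TVD) is also correct and is a nice elementary alternative, but the proof you actually lead with matches the paper's.
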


The proof utilizes the Jensen inequality and the convexity of the TVD. The full proof is presented in Appendix \ref{appendix:one_hot}. The proposition states that minimizing the TVD between the model distribution and the one-hot distribution \textit{on average} leads to the minimization of the TVD between the model distribution and the oracle distribution. By introducing an \textit{unbiased} estimation of the unknown oracle distribution, we sidestep the need of density ratio estimation and derive a practical upper bound using Monte-Carlo sampling from the oracle distribution.

\subsection{The Bias-variance Tradeoff}
\label{sec:bias_var}

However, using the one-hot distribution as an approximation in practice sometimes leads to high estimation variance. For example, in some applications where the real data distribution has a high entropy, using the one-hot proxy can hardly cover the diverse candidates. Therefore we consider a general form of the proxy distribution $\hat{{p}}^{(w)}$ where $w$ is the target token. We denote the expectation of the general proxy distribution 
as $\hat{p}^{<t}=\mathbb{E}_{w\sim p_{o}^{<t}}\Big[\hat{{p}}^{(w)}\Big]$. Then we show that the upper bound of the estimation error 
can be decomposed into a bias term and a variance term in the following:
\begin{equation}
    \label{eq:bias_var}
    \textnormal{Error}_{\hat{{p}}^{(w)}}
    \le 
    \underbrace{ \vphantom{\bigg[}
    D_{\textnormal{TV}}(\hat{p}^{<t}, p_{o}^{<t})}_{\text{Bias}} + 
    \underbrace{\mathbb{E}_{w\sim p_{o}^{<t}} \bigg[D_{\textnormal{TV}}(\hat{{p}}^{(w)}, \hat{p}^{<t})\bigg]}_{\text{Variance}},
\end{equation}
where $\textnormal{Error}_{\hat{{p}}^{(w)}}$ is defined as the difference of the practical estimation $\mathbb{E}_{w\sim p_{o}^{<t}}\big[D_{\text{TV}}(\hat{p}^{(w)}, p_\theta^{<t})\big]$ and the ideal target $D_{\text{TV}}(p_o^{<t}, p_\theta^{<t})$. 
The derivation applies triangle inequality to bound the error term (detailed derivation can be found in Appendix \ref{appendix:bias_var}). 
Specifically, we consider the one-hot distribution as an example: it has zero estimation bias (equation (\ref{eq:unbias})). However we show in Appendix \ref{appendix:tsallis} that its variance equals to $2H_\alpha(p_o^{<t})$ when $\alpha=2$, where $H_\alpha$ is the Tsallis $\alpha$-entropy~\citep{tsallis1988possible}. Therefore, the one-hot proxy suffers from a large variance when the entropy of $p_{o}^{<t}$ is high.

In order to handle the bias-variance tradeoff, we consider
a $\gamma$-mixture proxy distribution that interpolates the one-hot distribution and the model distribution with $\gamma$: $\hat{{p}}^{(w)}=\gamma {e}^{(w)} + (1-\gamma) {p}_\theta^{<t}$. 
Below we show the bias and variance in equation (\ref{eq:bias_var}) using this mixture proxy distribution:
\begin{equation}
    \label{eq:general_biasvar}
    \text{Bias}=(1-\gamma)\cdot D_{\text{TV}}(p_\theta^{<t}, p_{o}^{<t}), \quad \text{Variance}=\gamma \cdot \mathbb{E}_{w\sim p_{o}^{<t}}\bigg[D_{\text{TV}}({e}^{(w)}, p_{o}^{<t})\bigg].
\end{equation}
When we tune $\gamma$ from 1 to 0, the proxy distribution smoothly transfers from the unbiased one-hot distribution to a soft distribution, which reduces the variance of the one-hot estimation and stablizes training in the early stage. Although this comes at the cost of an increased estimation bias at the beginning of training, the bias gradually decreases as the model fits the data distribution more accurately when the training goes on.


\subsection{Total Variation Guided Language Generation ({TaiLr})}

\begin{figure}[t!]
    \centering
    \includegraphics[width=0.9\textwidth]{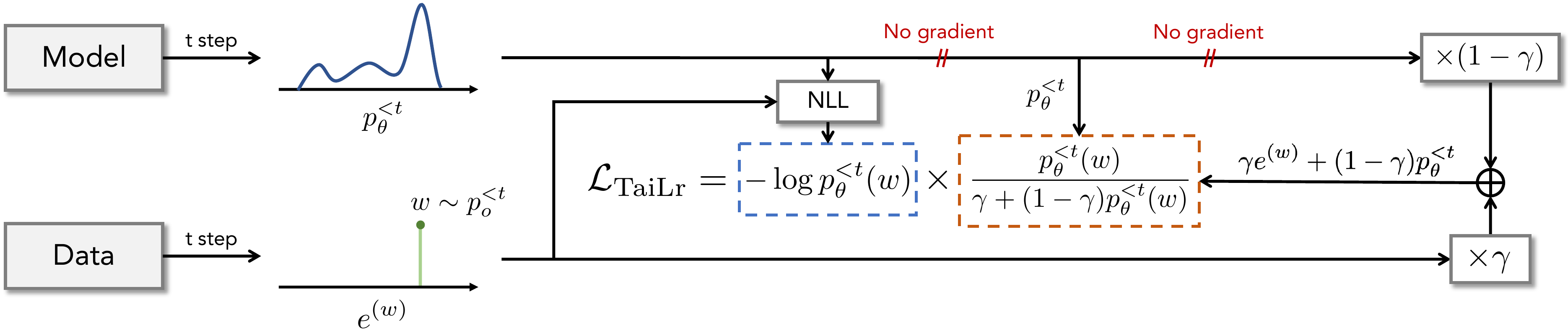}
    \caption{Computational graph of the {TaiLr} objective where the log-likelihood is weighted position-wisely.}
    \label{fig:comp_graph}
\end{figure}

Finally, we introduce the {TaiLr} objective by summarizing the above results. Given the target token $w$, we derive the TVD between the proxy distribution $\hat{{p}}^{(w)}=\gamma {e}^{(w)} + (1-\gamma)p_\theta^{<t}$ and the model distribution $p_\theta^{<t}$ following equation (\ref{eq:tvd_min}):
\begin{equation}
    \label{eq:tv_proxy}
    D_{\textnormal{TV}}(\hat{{p}}^{(w)}, p_{\theta}^{<t}) = 1 - \mathbb{E}_{y_t\sim \hat{{p}}^{(w)}} \min\Big(1, \frac{p_{\theta}^{<t}(y_t)}{\hat{{p}}^{(w)}(y_t)}\Big),
\end{equation}
where the expectation is approximated by sampling from the proxy distribution using Monte-Carlo sampling. 
When sampling $y_t\ne w$, the gradient of $D_{\textnormal{TV}}(\hat{{p}}^{(w)}, p_{\theta}^{<t})$ is always 0 which is inefficient for optimization. Therefore, we consider the non-zero gradient when $y_t$ is sampled as the target token $w$ to guide the model, i.e., $-\nabla_\theta p_{\theta}^{<t}(w)/ \hat{{p}}^{(w)}(w)$, and devise the {TaiLr} objective whose gradient is equivalent to it:
\begin{equation}\label{eq:tailor_loss}
   \mathcal{L}_{\text{TaiLr}}(w;\theta) = - \Bigg(\frac{p_\theta^{<t}(w)}{\gamma + (1-\gamma) p_\theta^{<t}(w)} \Bigg) \log p_\theta^{<t}(w),
\end{equation}
where the weighting factor is detached in the back-propagation and only the log term receives gradient. The equivalence of $\nabla_{\theta} \mathcal{L}_{\text{TaiLr}}(w;\theta)$ and the non-zero gradient of $D_{\textnormal{TV}}(\hat{{p}}^{(w)}, p_{\theta}^{<t})$ can be seen by applying $f(x)\nabla_x \log f(x)=\nabla_x f(x)$. In Figure \ref{fig:comp_graph}, we show the computational graph of the {TaiLr} objective. As $\gamma$ switches from 1 to 0, TaiLr is biased from an estimation of TVD to unweighted MLE. 
Intuitively, {TaiLr} downweights samples with low probabilities assigned by the model so that the model focuses on modeling the high-quality samples during training and reduces overestimation of degenerated texts during inference.
To counter the negative effect of random prediction at the early training stage, we set a threshold as a lower bound of the weighting factor.

\section{Experiments}

In the previous sections, we show that the proposed method is a practical estimation of TVD with theoretical guarantees. Next, we demonstrate its empirical performance. First, we conduct a synthetic experiment to investigate the behavior of the model trained by {TaiLr} and MLE in controlled settings where the underlying oracle distribution is known. Second, we compare {TaiLr} with other baselines in a more realistic setting where we train generation models with standard architectures or finetune pre-trained models on a wide range of language generation benchmarks. More experimental details which are not included in the following sections are provided in Appendix \ref{appendix:synth}.

\subsection{Synthetic Experiments}

\noindent\textbf{The synthetic data.} In this subsection, our goal is to test the behavior of {TaiLr} in the task of text generation. Since we seek to analyze the distributional properties, we sample training data an oracle model whose distribution is known. Instead of using random distributions~\citep{seqgan, leakgan}, we follow \citet{distortion} and train an oracle model on real human texts to generate synthetic data so that the results can better generalize to real data.
Specifically, we train a 1-layer LSTM on the texts of the COCO image caption dataset~\citep{coco} without any conditional inputs. 
We sample 10K synthetic data for training and 5K for validation.


\noindent\textbf{The model setting.} We train two LSTMs with the same architecture as the oracle model using MLE and {TaiLr}, which we denoted as $p_{\text{MLE}}$ and $p_{\text{TaiLr}}$, respectively. We train both models for $100$ epochs 
and pick the best checkpoint with the lowest perplexity on the development set. We use random sampling to obtain text samples from the learned generation models. 

\begin{table}[!ht]
    \centering
    \small
    \begin{tabular}{lcccc}
    \toprule[1pt]
    Model & $\text{PPL}_{oracle}$ \ $\downarrow$ & $\text{PPL}_{test}$ \ $\downarrow$ & BLEU-4 \ $\uparrow$& SelfBLEU-4 $\downarrow$\\
    \midrule[0.5pt]
    Training data & - & - & \textbf{35.40} & \underline{30.83} \\
    $p_{\text{MLE}}$ & \underline{31.64} & \underline{22.64} & 27.74 & 33.27\\
    $p_{\text{TaiLr}}$ & \textbf{26.91} & \textbf{22.42} & \underline{28.36} &  \textbf{28.91}\\
    \bottomrule[1pt]
    \end{tabular}
    \caption{Automatic evaluation results of the models trained by MLE and {TaiLr}. $\text{PPL}_{oracle}$ and BLEU assess the generation quality, while $\text{PPL}_{test}$ and SelfBLEU emphasize on sample diversity. \textbf{Boldface} and \underline{underline} indicate the highest and the second highest performance respectively.}
    \label{tab:coco_ppl}
\end{table}

\noindent\textbf{Performance evaluation.} To thoroughly evaluate the generation performance of the two models, we follow \citet{seqgan,gan_falls} to evaluate the generation quality with $\text{PPL}_{oracle}$ and the coverage of the oracle distribution with $\text{PPL}_{test}$. Specifically, $\text{PPL}_{oracle}$ is the likelihood of the oracle model calculated on the samples generated by the learned model, while $\text{PPL}_{test}$ is the likelihood of the learned model evaluated on the held-out data. 
We also include BLEU score~\citep{bleu} to calculate the average $n$-gram overlap between the generated sample and the held-out corpus, and SelfBLEU~\citep{selfbleu} which computes the average overlap of each generated sample to other samples generated by the model\footnote{The scripts of both BLEU and SelfBLEU are from \url{https://github.com/geek-ai/Texygen/blob/master/utils/metrics}}. For evaluation, we use 20K held-out data and sample 20K texts from the two generation models, respectively. As shown in Table \ref{tab:coco_ppl}, {TaiLr} improves the generation quality by nearly 5 points of $\text{PPL}_{oracle}$ without sacrificing the coverage to the oracle distribution as it achieves similar $\text{PPL}_{test}$ as MLE. We also observe that TaiLr achieves higher BLEU-4 than MLE while lower than the training data. 
Finally, we show that MLE has the highest SelfBLEU-4 which shows its tendency to over-generalize to unseen samples that may include repeated patterns and degrade the diversity, while {TaiLr} achieves the lowest SelfBLEU-4. We also report the result using GPT2-Medium as the oracle model in Appendix \ref{appendix:gpt2-medium}.


\noindent\textbf{Perturbation evaluation.} Next, we evaluate models' behavior on perturbed sequences and relate text degeneration as an implication of model's overestimation behavior on the pertubed data. To quantify the deviation of the model's estimation from the real data distribution, we define the model estimation error of a sample $\boldsymbol{x}$ as the difference between the sequence-level log probability given by the model and its true log probability, i.e., $\text{Error}(\boldsymbol{x})=\log p_{\theta}(\boldsymbol{x}) - \log p_{o}(\boldsymbol{x})$.
Then we show the construction of the perturbed dataset. Given $\boldsymbol{x}$ sampled from $p_{o}$, we iteratively apply small perturbations to $\boldsymbol{x}$ so that each lexical change is small. After $N$ perturbations, $\boldsymbol{x}\rightarrow \boldsymbol{x}^{(1)}\rightarrow\cdots \rightarrow\boldsymbol{x}^{(N)}$ smoothly transfers a data point into a perturbed sample. 
We propose the following perturbations that highlight the widely observed text degeneracy patterns in generation: (1) \textbf{Repeat} a token in $\boldsymbol{x}$ (repetition, \citet{rep}). (2) \textbf{Delete} the last token in $\boldsymbol{x}$ (oversmoothing, \citet{oversmoothing}). (3) \textbf{Substitute} a token in $\boldsymbol{x}$ with a token from the vocabulary (incoherence, \citet{holtzman2020topp}). We sample 20K samples from $p_o$ and apply $N=30$ perturbations to each sample.


\begin{figure}[t!]
\parbox{0.7\linewidth}{
\centering
    \includegraphics[width=1.0\linewidth]{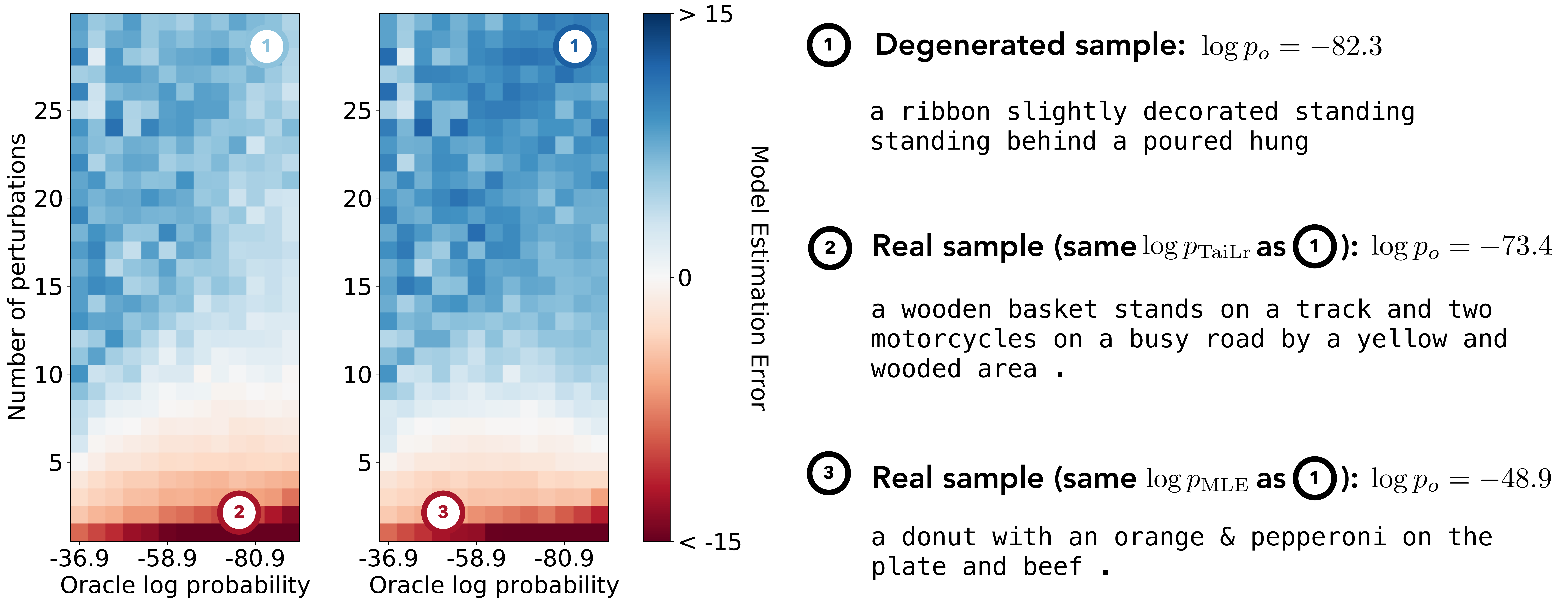}
    \caption{Estimation error map of $p_{\text{TaiLr}}$ (left) and $p_{\text{MLE}}$ (right) on the perturbed dataset. We present examples from different regions to illustrate the estimation behavior of the two models.}
    \label{fig:est_error}
}
\quad
\parbox{0.25\linewidth}{
\centering
    \includegraphics[width=0.9\linewidth]{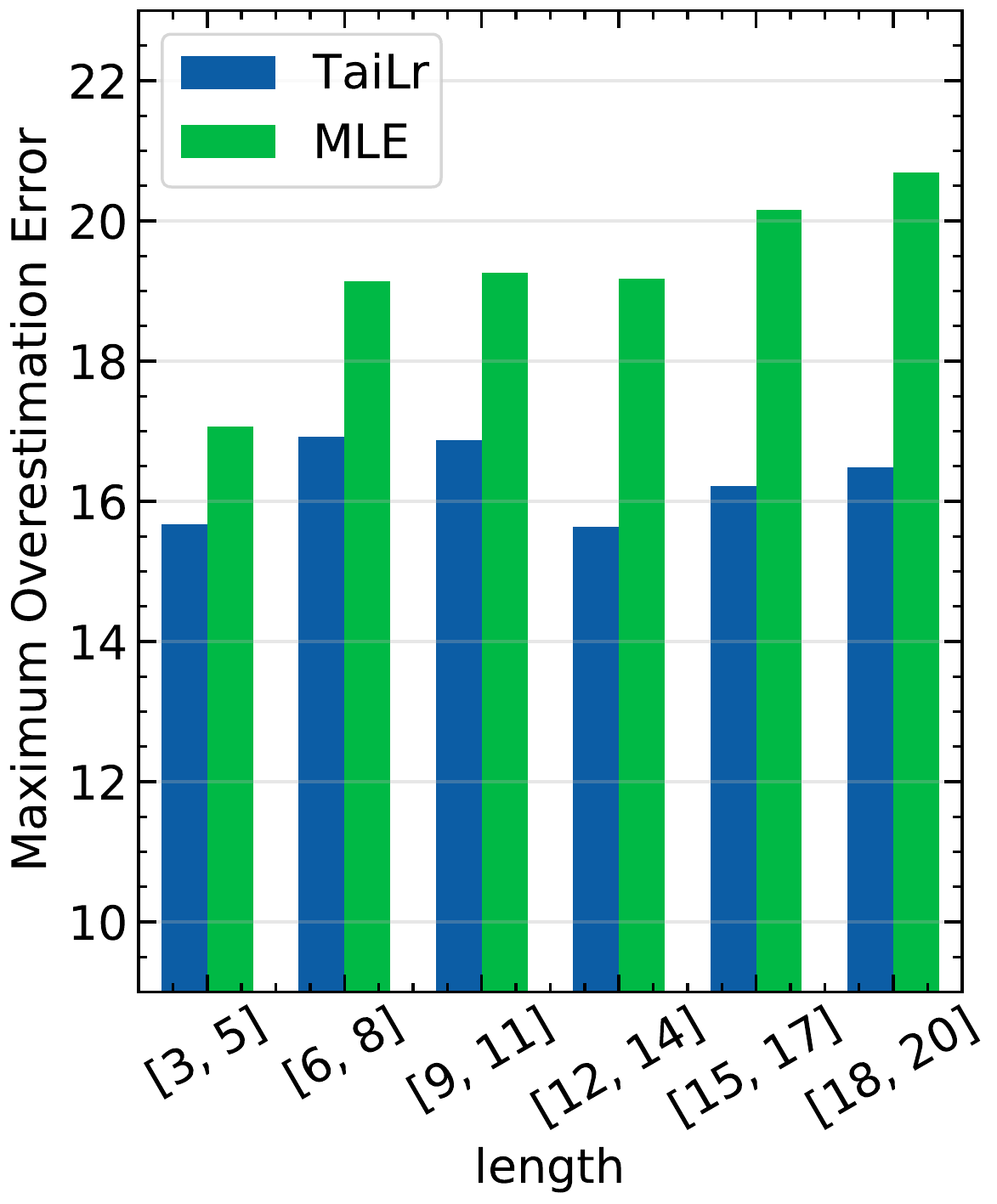}
    \caption{Maximum overestimation error varying with lengths.}
    \label{fig:error_len}
}
\vspace{-4pt}
\end{figure}

We first plot the estimation error map of the two models on the perturbed dataset in Figure \ref{fig:est_error}. 
For each perturbed sample $\boldsymbol{x}^{(i)}$, the oracle log probability $\log p_{o}(\boldsymbol{x}^{(i)})$ is shown in the x-axis, the number of perturbations $i$ performed is shown in the y-axis, and the estimation error $\text{Error}(\boldsymbol{x}^{(i)})$ is reflected by the shade. From the figures on the left, we first restate \citet{distortion}'s finding that exisiting models underestimate real samples while overestimating degenerated samples. Next, by comparing the two figures, we observe that as the number of perturbations increases, $p_{\text{TaiLr}}$ alleviates the overestimation phenomenon of $p_{\text{MLE}}$ especially in the long tail. 
Finally, we draw cases from different regions in the figure to illustrate its implication on text degeneration. We first present a degenerated sample \circled{1} on the top-right corner which has low oracle probability. We then present two real data samples \circled{2} and \circled{3} that have the same model probability under $p_{\text{TaiLr}}$ and $p_{\text{MLE}}$ as the degenerated one \circled{1}. Although \circled{3} is actually more probable than the degenerated one \circled{1} in the oracle distribution, MLE cannot distinguish between them leading to degeneracy patterns during generation.

To quantify the overestimation problem of perturbation, we further define the maximum overestimation error over $N$ perturbations as $\underset{i=1,\cdots,N}{\max} \  \text{Error}(\boldsymbol{x}^{(i)})$. 
To manifest the overestimation problem during generation, we plot the maximum overestimation error averaged on samples grouped with the similar length in Figure \ref{fig:error_len}. Note that the average NLL of these degenerated samples for $p_{\text{MLE}}$ and $p_{\text{TaiLr}}$ is 11.09 and 10.82 respectively. For MLE the overestimation error amplifies as the generation length increases while TaiLr maintains the error nearly at a constant. This result demonstrates that TaiLr alleviates MLE' s tendency to sample degenerated texts with the growth of the generation length by weighting the likelihood at each position of the sequence during training. 

\noindent\textbf{Error accumulation analysis.} Finally, we analyze the error accumulation during autoregressive decoding. 
We follow \citet{exposure_matters} and use the metric ExAccErr that calculates the percentage of \textit{excess} errors due to the discrepency of training (conditioning on contexts sampled from $p_o$) and inference (conditioning on contexts sampled from $p_\theta$), i.e., exposure bias. 
Detailed definitions borrowed from \citet{exposure_matters} are provided in Appendix \ref{appendix:error}. 
We found that the excess error of MLE model ($40.1\%$) is substantially higher than the model trained with {TaiLr} ($8.6\%$), which demonstrates that {TaiLr} effectively reduces the error accumulation during autoregressive decoding.

\subsection{Real-Data Experiments}
\label{sec:real_data}

In this subsection, we describe the empirical evaluation of {TaiLr} on a wide range of real-world language generation tasks, including: 
(1) \textbf{Machine Translation}: Given a sentence in the source language, the goal is to translate it into the target language. (2) \textbf{Text summarization}: Given a passage, the goal is to generate a short sentence that summarizes the main point of the passage. 
(3) \textbf{Long text generation}: Given a title, the goal is to generate a coherent long passage that conforms with the title. Statistics and sources of all datasets used in experiments are provided in Appendix \ref{appendix:data_detail}.

Apart from MLE, we also consider the following typical baselines that proposed new training objectives beyond MLE:  (1) \textbf{Unlikelihood training}~\citep{rep} penalizes unlikely generations, e.g., token repetitions, through an auxiliary unlikelihood loss.
(2) \textbf{D2GPo}~\citep{d2gpo} proposes a data-dependent gaussian prior objective that smooths the one-hot target distribution based on word embedding distance.
(3) \textbf{Loss truncation}~\citep{loss_trunc} abandons a $c$-fraction of the training samples with the highest NLL, which heuristically optimizes distinguishability. 
(4) \textbf{GOLD}~\citep{demonstration} learns from human demonstrations using the off-policy setting of Reinforcement Learning (RL). We choose to compare with GOLD-$\delta$ that does not use scoring models with additional parameters for a fair comparison.  We
use the paired bootstrap resampling~\citep{DBLP:conf/emnlp/Koehn04} in all tasks for significance testing.

\begin{wraptable}{r}{0.5\linewidth}
    \centering
    \small
    \begin{tabular}{lS[table-format=2.2]S[table-format=2.2]}
        \toprule[1pt]
        \text{Method} & \text{Dev BLEU} & \text{Test BLEU} \\
        \midrule[0.5pt]
        MLE & 35.81$^{\ddagger}$ & 34.27$^{\ddagger}$  \\
        Unlikelihood & 33.92$^{\ddagger}$ & 32.82$^{\ddagger}$ \\
        D2GPo & 36.09$^{\ddagger}$ & 34.50$^\ddagger$ \\
        Loss truncation & 35.63$^{\dagger}$ & 34.48$^{\ddagger}$ \\
        GOLD & 35.74$^{\ddagger}$ &34.68$^{\dagger}$  \\
        {TaiLr} & \bf{36.44} & \bf{35.05}  \\
        \bottomrule[1pt]
    \end{tabular}
    \caption{BLEU score comparison on the dev and test set of IWSLT14 De-En. $\dagger/\ddagger$ means TaiLr is significantly better with p-value $<0.05/0.01$.}
    \label{tab:mt}
    \vspace{-3pt}
\end{wraptable}
\noindent\textbf{Machine Translation.} We evaluate the proposed method on a widely-used machine translation benchmark IWSLT14 De-En using the standard Transformer architecture~\citep{vaswani}. 
%
Training settings and detailed hyperparameters of different models are provided in Appendix \ref{appendix:mt}. The best checkpoint is selected based on the highest BLEU~\citep{bleu} score on the development set. We used beam search with a beam size of 5 for decoding. In Table \ref{tab:mt}, we show the performance of our method and the baseline methods in terms of BLEU score. The results show that {TaiLr} achieves higher BLEU score compared to MLE, which indicates that TVD effectively improves the generation quality over KLD. TaiLr also significantly outperforms other objectives that modify the MLE baseline.

\noindent\textbf{Text summarization.} We then test the proposed method on abstractive text summarization. We used the Annotated Gigaword corpus~\citep{gigaword} as it is known to have noisy references due to annotation errors~\citep{DBLP:conf/naacl/KlebanovB10, loss_trunc}. As pre-trained Transformer models have achieved strong performance, we thus propose to finetune the BART-base~\citep{bart} model with different methods and see whether they still improve upon the strong baseline. More training details and hyperparameter settings are provided in Appendix \ref{appendix:sum}. We select the best checkpoint based on the highest ROUGE-L~\citep{lin2004rouge} score on the development set. During inference, we use beam search with a beam size of 5 and prohibit decoding repeated $3$-grams. We report the ROUGE-1/2/L scores on the test set of the Gigaword dataset in Table \ref{tab:gigaword} where {TaiLr} outperforms all the baseline methods in terms of all evaluation metrics. 
The result demonstrates the effectiveness of our method in the realistic setting where noisy data pairs exist.

\begin{table}[!ht]
    \parbox{0.42\linewidth}{
    \centering
    \small
    \begin{tabular}{lS[table-format=2.2]S[table-format=2.2]S[table-format=2.2]}
    \toprule[1pt]
    Method & \text{R-1} & \text{R-2} & \text{R-L} \\
    \midrule[0.5pt]
    MLE & 38.24$^\ddagger$ & 19.12 & 35.70$^\dagger$ \\
    Unlikelihood & 37.80$^\ddagger$ & 18.34$^\ddagger$ & 34.84$^\ddagger$\\
    D2GPo & 38.52$^\dagger$ & 18.92$^\dagger$ & 35.64$^\ddagger$ \\
    Loss truncation & 38.62 & 19.29 & 35.85$^\dagger$ \\
    GOLD & 38.57$^\dagger$ & 19.27 & 35.79$^\dagger$\\
    {TaiLr} & \bf{38.82} & \bf{19.50} & \bf{36.24} \\
    \bottomrule[1pt]
    \end{tabular}
    \caption{Generation performance of different methods on the test set of the Gigaword dataset. $\dagger/\ddagger$ means TaiLr is significantly better with p-value $<0.05/0.01$.}
    \label{tab:gigaword}
    }
    \quad
    \parbox{0.55\linewidth}{
    \centering
    \small
    \begin{tabular}{lS[table-format=2.2]S[table-format=2.2]S[table-format=2.2]S[table-format=2.2]}
    \toprule[1pt]
    Method & \text{B-1} $\uparrow$ &  \text{D-4} $\uparrow$ & \text{rep-8} $\downarrow$ & \text{Mauve} $\uparrow$\\
    \midrule[0.5pt]
        MLE & 27.85 & 84.28 & 10.31$^\dagger$ & 56.42$^\ddagger$ \\
        Unlikelihood & 27.88 & 85.46 & 10.06 & 59.35$^\ddagger$ \\
        D2GPo & 22.73$^\ddagger$ & 84.10 & 10.04 & 53.35$^\ddagger$ \\
        Loss truncation & 19.49$^\ddagger$ & 76.51$^\ddagger$ & 13.41$^\ddagger$ & 45.35$^\ddagger$ \\
        GOLD & 25.25$^\ddagger$ & 46.98$^\ddagger$ & 28.23$^\ddagger$ & 15.44$^\ddagger$ \\
        {TaiLr} & \bf{28.62} & \bf{85.56} & \bf{9.73} & \bf{64.64} \\
    \bottomrule[1pt]
    \end{tabular}
    \caption{Results of automatic metrics on the test set of the WritingPrompts dataset. $\uparrow$/$\downarrow$ means the higher/lower the better. $\dagger /\ddagger$ means TaiLr is significantly better with p-value $<0.05/0.01$.}
    \label{tab:long}
    }
    \vspace{-5pt}
\end{table}

\noindent\textbf{Long text generation.} Finally, we evaluate {TaiLr} on the task of long text generation to show its performance in open-ended generation. We evaluate on the WritingPrompts~\citep{DBLP:conf/acl/LewisDF18} dataset and leverage the generation ability of the pre-trained model by finetuning the BART-base model. More training details are provided in Appendix \ref{appendix:long}. For evaluation, we sampled 1,000 titles from the test set following \citet{DBLP:conf/emnlp/JiH21}. We use Nucleus sampling~\citep{holtzman2020topp} with $p=0.95$ and restricte a maximum generation length of 1,024 subwords. For automatic evaluation, we use BLEU-$n$ (B-$n$) 
to evaluate the $n$-gram overlap to the human reference, Distinct-$n$~\citep{distinct} (D-$n$) 
to compute the ratio of unique $n$-grams, rep-$l$~\citep{rep} to calculate the repetition rate within the context window of $l$, and Mauve~\citep{mauve} that assesses the distributional deviation of model-generated texts and human language by calculating the area under the divergence curve. 
As shown in Table \ref{tab:long}, {TaiLr} outperforms the MLE baseline in terms of all metrics. For other baselines, Loss truncation abandons long samples with high NLL leading to overly short generations and low 
n-gram overlap to the reference. GOLD tends to concentrate on very few modes in the target distribution as discussed by \citet{demonstration}, which causes low diversity and large discrepency to the distribution of human language.


\noindent\textbf{Ablation study and discussion.} We conduct ablation study of adjusting $\gamma$ on different tasks to show that its tendency and sensitivity interval vary on different tasks. In Figure \ref{fig:gamma} in Appendix \ref{appendix:ablation}, we present the result of adjusting $\gamma$ on WritingPrompts on the left and observe that the highest Mauve score is achieved when $\gamma$ is around $10^{-5}$, while the performance quickly degrades as $\gamma$ approaches $1$. On the right of Figure \ref{fig:gamma}, we observe that the best performance is achieved when $\gamma$ is around $0.1$ on IWSLT14 De-En while either increasing or decreasing $\gamma$ leads to a notable performance drop. From an empirical view, the scale of the best performing $\gamma$ is related to the intrisic entropy of the dataset. 
For stable training, we require the estimation variance in equation (\ref{eq:general_biasvar}) to be small, which leads to small $\gamma$ when the entropy of the data is high. 
Since the model generally has higher NLL on long text generation than on machine translation, the scale of the best $\gamma$ is thereby shifted towards 0 on WritingPrompts. 
To further determine the sensitivity interval of $\gamma$, we suggest to tune the scale of $\gamma$ based on the average NLL on the training data, where the empirical principle is to make the weighting factor in equation (\ref{eq:tailor_loss}) relatively large to stablize training.
Although simple, we argue that this parameter is crucial to the generality of the application, and we leave other solutions of dynamically adjusting or annealing this hyperparameter to future work. 


\section{Conclusion}

In this work, we draw attention to the total variation distance (TVD), a robust alternative to KL divergence (KLD). We show that TVD addresses the zero-avoiding problem of KLD and mitigates overestimation of the degenerated sequences, which in turn improves the overall generation quality. To apply TVD to the task of language generation, we derive practical upper bounds, and introduce our Total Variation Guided Language Generation ({TaiLr}) objective that balances the bias-variance tradeoff of estimating TVD with a tunable hyperparameter. Our experiments on synthetic data and real-data benchmarks demonstrate that {TaiLr} alleviates the overestimation problem and the error accumulation during autoregressive decoding, and improves the generation quality over competitive baselines beyond MLE on a wide range of language generation tasks.


\section*{Acknowledgements}

This work was supported by the Major Project of the New Generation of Artificial Intelligence (No. 2018AAA0102900). This work was also supported by the National Key Research and Development Program of China (No. 2021ZD0113304) and the National Science Foundation for Distinguished Young Scholars (with No. 62125604).

\bibliography{iclr2023_conference}
\bibliographystyle{iclr2023_conference}

\newpage
\appendix
\section{Derivations and Proofs}

\subsection{Derivation of Equation \ref{eq:grad_tvd}}
\label{appendix:grad_tvd}

Starting from equation (\ref{eq:tvd_min}), we rewrite the summation into the expectation of $p_o$ which is further approximated by sampling context-target pair $(\boldsymbol{x}^*, \boldsymbol{y}^*)$ from $p_o$:
\begin{align}
    \nabla_\theta D_{\text{TV}}(p_o, p_\theta) &= -\nabla_\theta \sum_{y\in \mathcal{Y}} \min \bigg(p_o(\boldsymbol{y}|\boldsymbol{x^*}), p_\theta(\boldsymbol{y}|\boldsymbol{x^*})\bigg) \\
    &=-\nabla_\theta \mathbb{E}_{\boldsymbol{y}\sim p_o}\bigg[\min\bigg(1, \frac{p_\theta(\boldsymbol{y}|\boldsymbol{x^*})}{p_o(\boldsymbol{y}|\boldsymbol{x^*})}\bigg)\bigg] \\
    &\approx-\nabla_\theta\min\bigg(1, \frac{p_\theta(\boldsymbol{y}^*|\boldsymbol{x}^*)}{p_o(\boldsymbol{y}^*|\boldsymbol{x}^*)}\bigg)\\
    &=\begin{cases}
    - \dfrac{\nabla_\theta p_\theta(\boldsymbol{y}^*|\boldsymbol{x}^*)}{p_{o}(\boldsymbol{y}^*|\boldsymbol{x}^*)}, & p_\theta(\boldsymbol{y}^*|\boldsymbol{x}^*) < p_{o}(\boldsymbol{y}^*|\boldsymbol{x}^*)\\
    0, & p_\theta(\boldsymbol{y}^*|\boldsymbol{x}^*) \ge p_{o}(\boldsymbol{y}^*|\boldsymbol{x}^*) \\
    \end{cases},
\end{align}

where the third line is approxmiated by Monte-Carlo sampling and the last line of case follows from comparing $p_\theta(\boldsymbol{y}^*|\boldsymbol{x}^*)$ and $p_{o}(\boldsymbol{y}^*|\boldsymbol{x}^*)$. 

\subsection{Proof of Proposition \ref{prop:tvd_tok}}
\label{appendix:tvd_tok}

\Firstprop*

\begin{proof}
Let $a_t=p_o^{<t}(y_t)$, $b_t=p_\theta^{<t}(y_t)$. We define $c_t =(a_1\times \cdots \times a_t)\times(b_{t+1}\times \cdots \times b_T)$, and then we present the following inequality:
\begin{equation}
    |c_T - c_0| \le \sum_{t=1}^T |c_t - c_{t-1}|,
\end{equation}
which can be derived from the general triangle inequality. After replacing $c_t$ with $a_t$ and $b_t$ we have:
\begin{equation}
    \label{eq:lemma}
    \Bigg|\prod_{t=1}^T a_t - \prod_{t=1}^T b_t\Bigg| \le \sum_{t=1}^T  |a_t - b_t | \cdot \Bigg(\prod_{i=1}^{t-1} a_i \Bigg) \cdot \Bigg(\prod_{j=t+1}^T b_j\Bigg).
\end{equation}

Then we derive the upper bound of equation (\ref{eq:tvd_abs}):
\begin{align}
    D_{\text{TV}}(p_o, p_\theta) & = \frac{1}{2}\sum_{\boldsymbol{y}} \Big|p_o(\boldsymbol{y}) - p_\theta(\boldsymbol{y})\Big| \\
    & = \frac{1}{2} \sum_{y_1,\cdots, y_T} \Bigg| \prod_{t=1}^T p_o^{<t}(y_t) - \prod_{t=1}^T p_\theta^{<t}(y_t) \Bigg| \\
    \label{eq:ineq}
    & \le \frac{1}{2} 
    \sum_{t=1}^T  \sum_{y_1,\cdots,y_t} \prod_{i=1}^{t-1}p_o^{<i}(y_i)\cdot \Big|p_o^{<t}(y_t) - p_\theta^{<t}(y_t)\Big| \cdot \sum_{y_{t+1},\cdots, y_T}\prod_{j=t+1}^T p_\theta^{<j}(y_j) \\
    \label{eq:prob}
    & = \frac{1}{2} \sum_{t=1}^T \sum_{y_t}\mathbb{E}_{\boldsymbol{y}_{<t}\sim p_o}\Big|p_o^{<t}(y_t) - p_\theta^{<t}(y_t)\Big|\\
    & = \mathbb{E}_{\boldsymbol{y}\sim p_o} \Bigg[\sum_{t=1}^T D_{\text{TV}}(p_o^{<t}, p_\theta^{<t})\Bigg],
\end{align}
where equation (\ref{eq:ineq}) uses the conclusion from equation (\ref{eq:lemma}) and equation (\ref{eq:prob}) is obtained by marginalizing out $y_{t+1},\cdots, y_T$.
\end{proof}

\subsection{Proof of Proposition \ref{prop:one_hot}}
\label{appendix:one_hot}

\Secondprop*

\begin{proof}
Given that $\mathbb{E}_{w\sim p_o^{<t}}\big[e^{(w)}\big] = p_o^{<t}$, and we have:
\begin{align}
    D_{\text{TV}}(p_o^{<t}, p_\theta^{<t}) & = \frac{1}{2}\sum_{y_t} \Big|p_o^{<t}(y_t) - p_\theta^{<t}(y_t)\Big| \\
    & = \frac{1}{2}\sum_{y_t} \Big| \mathbb{E}_{w\sim p_o^{<t}}\big[e^{(w)}(y_t)\big] - p_\theta^{<t}(y_t)\Big|\\
    & = \frac{1}{2}\sum_{y_t} \Big| \mathbb{E}_{w\sim p_o^{<t}}\big[e^{(w)}(y_t) - p_\theta^{<t}(y_t)\big]\Big| \\
    \label{eq:jensen}
    &\le \mathbb{E}_{w\sim p_o^{<t}}\Bigg[\frac{1}{2}\sum_{y_t} \Big|e^{(w)}(y_t) - p_\theta^{<t}(y_t)\Big|\Bigg]\\
    &=\mathbb{E}_{w\sim p_{o}^{<t}}\bigg[D_{\textnormal{TV}}({e}^{(w)}, p_\theta^{<t})\bigg],
\end{align}
where equation (\ref{eq:jensen}) is obtained by applying the Jensen inequality since the absolute value function $|\cdot |$ is convex.
\end{proof}

\subsection{Derivation of Equation \ref{eq:bias_var}}
\label{appendix:bias_var}
We first consider the following triangle inequality:
\begin{equation}
    \Big|\hat{p}^{(w)}(y_t) - p_\theta^{<t}(y_t)\Big| \le \Big|\hat{p}^{(w)}(y_t) - \hat{p}^{<t}(y_t)\Big| + \Big|\hat{p}^{<t}(y_t) - p_o^{<t}(y_t)\Big| + \Big|p_o^{<t}(y_t) - p_\theta^{<t}(y_t)\Big|,
\end{equation}
where $y_t$ can take any token in the vocabulary. Then we sum $y_t$ over the vocabulary and take the expectation of $w$ with respect to $p_o^{<t}$, which results in the form of TVD:
\begin{equation}
    \mathbb{E}_{w\sim p_o^{<t}} \Big[D_{\text{TV}}(\hat{p}^{(w)}, p_\theta^{<t})\Big] \le \mathbb{E}_{w\sim p_o^{<t}}\Big[D_{TV}(\hat{p}^{(w)}, \hat{p}^{<t})\Big] + D_{\text{TV}}(\hat{p}^{<t}, p_o^{<t}) + D_{\text{TV}}(p_o^{<t}, p_\theta^{<t}).
\end{equation}
By defining the estimation error as: $\text{Error}_{\hat{p}^{(w)}} = \mathbb{E}_{w\sim p_o^{<t}} \Big[D_{\text{TV}}(\hat{p}^{(w)}, p_\theta^{<t})\Big] - D_{\text{TV}}(p_o^{<t}, p_\theta^{<t})$, we obtain equation (\ref{eq:bias_var}).

\subsection{Connection with Tsallis $\alpha$-entropy}
\label{appendix:tsallis}

When using the one-hot distribution as the proxy distribution: $\hat{p}^{(w)}=e^{(w)}$, the variance term of equation (\ref{eq:bias_var}) can be derived into:
\begin{align}
    \mathbb{E}_{w\sim p_{o}^{<t}} \bigg[D_{\textnormal{TV}}(e^{(w)}, p_o^{<t})\bigg]
    &=\mathbb{E}_{w\sim p_{o}^{<t}}
    \bigg[1 - \sum_{y_t}\min(e^{(w)}(y_t), p_o^{<t}(y_t))\bigg]\\
    &=\mathbb{E}_{w\sim p_{o}^{<t}}
    \bigg[1 - p_o^{<t}(w)\bigg]\\
    &= 1 - \sum_{w} p_o^{<t}(w)^2
\end{align}
As the Tsallis $\alpha$-entropy is defined as:
\begin{equation}
    H_\alpha (p) = 
    \begin{cases}
    \frac{1}{\alpha(\alpha-1)}(1 - \sum_i p_i^{\alpha}), & \alpha\ne 1\\
    -\sum_i p_i \log p_i, & \alpha =1
    \end{cases}
\end{equation}
Thus, the variance can be seen as $2H_2(p_o^{<t})$, which reflects the intrinsic uncertainty of the oracle data distribution.

\section{Additional Related Work}
\label{appendix:related_work}

\textbf{Text Degeneration and Solutions.} 
The phenomena of text degeneration was specified in previous literature~\citep{holtzman2020topp, rep}: a well-trained language model is observed to get stuck in repetitive loops, or produce incoherent contents. One line of works attributed this problem to the improper goal of the decoding algorithm that \textit{maximizes} likelihood, and proposed to sample from a truncated probability distribution to balance repetitiveness and incoherence~\citep{holtzman2020topp,mirostat}. While another line of works attempted to address this issue by designing new training objectives. \citet{rep} proposed to minimize the likelihood of the degenerated cases through unlikelihood training. However, designing unlikelihood objectives to cover every degenerated cases is impossible. 
Policy gradient-based RL algorithms are another widely adopted options due to the flexibility of reward designing~\citep{DBLP:conf/nips/NorouziBCJSWS16,DBLP:conf/naacl/PasunuruB18,wu2018study}. Recently, \citet{demonstration} proposed an off-policy RL algorthim that achieves strong performance upon the pre-trained generation models by circumventing the optimization challenges in traditional on-policy RL methods~\cite{choshen2019weaknesses}. Our work falls in the second line. Instead of relying on heuristics, we start with the distributional property of the current MLE objective, and derive a new principled objective by leveraging the total variation distance which we show to reduce the overestimation of degenerated samples.

\textbf{Distance Metrics beyond KLD.} Maximum likelihood estimation (MLE) is used as a standard training criterion of language generation due to its simplicity and its theoretical guarantee of minimizing the Kullback Leibler divergence (KLD)~\citep{zhang2018minimum}. However, minimizing KLD between the data and model distribution is known to lead to the zero-avoiding solution, which forces the model to cover all the modes in the data by sacrificing the fitting accuracy to individual modes. To address this issue, previous study have introduced other distance metrics to substitute or regularize the standard KLD. The reverse KLD between the model and data distribution is previously studied to balance the standard KLD~\citep{DBLP:journals/corr/Huszar15,li2019dualskew,SMART}. In the field of language generation, \citet{li2019dualskew} applied reverse KLD to regularize the standard KLD in machine translation. However, the regularization intensity needs sophisticated controlling which makes the approach less practical. Total variation distance (TVD) is previously adopted to evaluate the generation models via distinguishability of samples~\citep{DBLP:conf/naacl/HashimotoZL19,DBLP:conf/acl/GehrmannSR19,he-etal-2021-exposure}. Generative adversarial networks are known to directly minimize distinguishability, but is challenging to optimize in practice~\citep{gan_falls}. \citet{loss_trunc} proposed to optimize TVD by heuristically truncating samples with high NLL. Other metrics such as the Power divergence~\citep{labeau-cohen-2019-experimenting} and the Hellinger distance~\citep{zhang2018minimum} are also explored in previous literature, but they lack theoretical justification and empirical evidence to their superiority in language generation. In this work, we seek to directly optimize TVD and derive a training objective called {TaiLr} by deriving practical upper bounds on TVD.

\section{Discussions}

In this section, we discuss the connection of our method to two major baselines we compared with, i.e., loss truncation~\citep{loss_trunc} and GOLD~\citep{demonstration} as the two works share a similar motivation with us to downweight unlikely samples. We will briefly review the two methods and then compare them to our method to show why our approach excels.

\subsection{Loss Truncation}

Briefly speaking, loss truncation abandons samples with high log loss (also known as the negative log-likelihood) in training. In the Proposition 1~\citep{loss_trunc}, they proved that the model's log loss on the truncated data distribution with an extra constant sets an upper bound on the total variation distance (TVD). The authors thus proposed to optimize the model on a ``simpler'' subset of the full dataset in order to achieve smaller log loss that would tighten the upper bound (note that the constant would be a trade-off to the bound as removing more data will increase the constant). To achieve this, they first heuristically drop the ``hard'' samples with the highest log losses (hot-start stage), and then train the model on the remaining data (training stage).

Loss truncation can be regarded as downweighting samples at the sequence level with binary weights. However, performing sequence-level sample dropping may systematically lose some specific modes in the data, which is undesirable. For example, we observed overly short generation length of loss truncation (384 compared to 511 of MLE) on WritingPrompts as a result of dropping long samples with high log losses. Moreover, determining which samples to drop heavily relies on the first hot-start stage where the model should not be too random or too certain. In practice, it is hard to determine the degree to which the hot-start training should proceed. In our work, we derived our training objective TaiLr from the practical upper bounds of TVD, which softly weights the log loss on the token level. We also analyzed the bias and variance tradeoff in estimating the bound. By appropriately setting the tradeoff hyperparameter, we can reduce the estimation variance at the start of training and gradually decrease the estimation bias as the model becomes more accurate, which circumvents the need of hot-starting the model.

\subsection{GOLD}

On the other hand, GOLD learns the generation policy in an offline setting, i.e., calculating rewards on target trajectories by reweighting the policy gradient with importance weights. They made several approximations including simplifying the multi-step importance weights into a single step, and simply assuming the per-step target policy to be uniform. Besides the importance weight, they also came up with several reward functions which score the sequence with another generation model trained by MLE. Although starting with a quite different motivation, the final form of the training objective is quite similar to ours.

From an empirical view, GOLD assigns soft weights to samples on the token level based on the importance weight and some hand-crafted reward functions. However, one downside (also discussed in the original paper of GOLD) is that the objective only focuses on capturing the high-likelihood patterns in the data and fails when there are a variety of candidates given the same input in the data. This is also reflected in our experimental results in Table \ref{tab:long}, where GOLD has the highest repetition rate and the lowest diversity. In our paper, we analyze the situation where the target distribution has a high entropy, and decompose the error of estimating TVD into the bias and the variance term (Section \ref{sec:bias_var}). We found that the GOLD-style importance weight actually leads to an unbiased but high-variance estimator of TVD, which renders the optimization hard to proceed. In our TaiLr objective, we proposed to balance the bias and variance with a hyperparameter, which can be effectively applied to different generation tasks like machine translation, text summarization, and long text generation by tuning this hyperparameter.

\section{Dataset Details}
\label{appendix:data_detail}

We provide the statistics of the datasets used in \S{\ref{sec:real_data}} in Table \ref{tab:stats}.

\begin{table}[!ht]
    \centering
    \begin{tabular}{cccc}
        \toprule[1pt]
        Dataset & train & dev & test \\
        \midrule[0.5pt]
        IWSLT14 De-En & 160,239 & 7,283 & 6,750 \\
        Gigaword corpus & 3,803,957 & 189,651 & 1,951 \\
        WritingPrompts & 272,600 & 15,620 & 15,138 \\
        \bottomrule[1pt]
    \end{tabular}
    \caption{Number of examples in each split of the datasets used in the experiments.}
    \label{tab:stats}
\end{table}

To download and preprocess the IWSLT14 De-En dataset, we follow the instructions in \url{https://github.com/facebookresearch/fairseq/tree/main/examples/translation}. To download and preprocess the Gigaword corpus, we follow the instructions in \url{https://huggingface.co/datasets/gigaword}. To download the WritingPrompts dataset, we follow the instructions in \url{https://github.com/facebookresearch/fairseq/tree/main/examples/stories}.

\section{Experiment Details}
\label{appendix:exp_detail}

\subsection{Synthetic Experiments}
\label{appendix:synth}

Both the models trained with MLE and TaiLr are 1-layer LSTMs with a hidden size $d=128$. The models are trained using Adam optimizer ($\beta_1=0.9$, $\beta_2=0.999$) with a fixed learning rate of 1e-3 and no weight decay. We use a maximum number of 4096 tokens at each batch. The dropout rate is set to 0.1. We evaluate the perplexity on the dev set at the end of each epoch. As the entropy of the synthetic data is high\footnote{The entropy of the synthetic data is 10.77 considering the maximum entropy for dataset with a vocabulary size of 5000 is 12.28.}, we tune the hyperparameter in the proxy distribution $\gamma\in\{10^{-8}, 10^{-7}, \dots, 0.1,1.0\}$. Since the model probability is not reliable at the start of training, we tune the threshold of the weighting factor by $b_m\in\{0, 0.1, 0.2, 0.3\}$. On the synthetic data, we found that a large $\gamma$ leads to slower convergence and higher validation loss, which indicates a large estimation variance during training. While the best performance is achieved when $\gamma=10^{-7}$, $b_m=0.2$. The experiment is conducted using the fairseq Toolkit.



\subsubsection{Experiments Using Larger Oracle Model}
\label{appendix:gpt2-medium}

To demonstrate the effectiveness of our method on a more realistic oracle distribution, we additionally conduct a synthetic experiment using GPT2-Medium~\citep{radford2019gpt2} as the oracle model. We randomly sample 500K sequences with a maximum length of 64 tokens, and split them into two sets with 400K and 100K sequences for training and evaluation, respectively. We train a 6-layer Transformer with a hidden dimension of 512 and 16 heads from scratch using the MLE and TaiLr objective on the synthetic data, respectively. Both models are trained for 5 epochs with a initial learning rate of $10^{-3}$ using linear scheduler. The batch size is set to 64, with gradient accumulation steps of 4. The best checkpoint is selected based on the perplexity on the dev set. We tune the hyperparameter $\gamma \in \{10^{-8}, 10^{-7}, \dots, 10^{-5}\}$ and $b_m \in \{0, 0.1, 0.2\}$. The best performance of TaiLr model is achieved when $\gamma=10^{-7}$ and $b_m=0.1$. We report $\text{PPL}_{oracle}$, $\text{PPL}_{test}$, BLEU-4 and SelfBLEU-4 in Table \ref{tab:coco_ppl_gpt2}.

\begin{table}[!ht]
    \centering
    \small
    \begin{tabular}{lcccc}
    \toprule[1pt]
    Model & $\text{PPL}_{oracle}$ \ $\downarrow$ & $\text{PPL}_{test}$ \ $\downarrow$ & BLEU-4 \ $\uparrow$& SelfBLEU-4 $\downarrow$\\
    \midrule[0.5pt]
    $p_{\text{MLE}}$ & {138.35} & {260.09} & 5.76 & 6.38\\
    $p_{\text{TaiLr}}$ & \textbf{137.59} & \textbf{253.01} & \textbf{5.96} &  \textbf{5.94}\\
    \bottomrule[1pt]
    \end{tabular}
    \caption{Automatic evaluation results of the models trained by MLE and {TaiLr} using GPT2-Medium as the oracle model. \textbf{Boldface} indicates the highest performance.}
    \label{tab:coco_ppl_gpt2}
\end{table}

The results show that our proposed TaiLr objective consistently outperforms MLE in terms of generation quality ($\text{PPL}_{oracle}$, BLEU-4) and coverage ($\text{PPL}_{test}$, SelfBLEU-4) when trained on the samples from a stronger oracle model. Note that randomly sampling from GPT2-Medium produces a highly diverse corpus that is more difficult for the Transformer model to fit than the LSTM-based oracle model. Therefore, the scale of the perplexity is higher than those reported in Table \ref{tab:coco_ppl}.

\subsubsection{Error Accumulation Analysis}
\label{appendix:error}

We define ExAccErr by adapting the definition from \citet{exposure_matters} to our setting (As we fix the decoding strategy to random sampling, we omit it in the definition):
\begin{equation}
    \%\text{ExAccErr}(l) = \frac{\mathcal{R}_{\le l}(p_\theta) - l\epsilon_{\le l}}{l \epsilon_{\le l}} \times 100\%
\end{equation}
where $\mathcal{R}_{\le l}(p_\theta)$ is the inference-time regret of the model $p_\theta$ imitating oracle $p_{o}$ on the length-$l$ context induced by the model $p_\theta$  (equation (12) in \citet{exposure_matters}), and $\epsilon_{\le l}$ is the average per-step error of $p_\theta$ evaluating on the context sampled form oracle $p_{o}$ up to length $l$ (equation (16) in \citet{exposure_matters}):
\begin{align}
    \label{eq:regret}
    & \mathcal{R}_{\le l}(p_\theta) = \sum_{t=1}^l \underset{\substack{\boldsymbol{y}_{<t}\sim p_\theta \\ y_t\sim p_o(\cdot |\boldsymbol{y}_{<t})}}{\mathbb{E}} \log \frac{p_o(y_t|\boldsymbol{y}_{<t})}{p_\theta(y_t|\boldsymbol{y}_{<t})} \\
    & \epsilon_{\le l}  = \frac{1}{l} \sum_{t=1}^l \underset{\substack{\boldsymbol{y}_{<t} \sim p_o \\ y_t \sim p_o(\cdot|\boldsymbol{y}_{<t})}}{\mathbb{E}} \log \frac{p_o(y_t|\boldsymbol{y}_{<t})}{p_\theta(y_t|\boldsymbol{y}_{<t})}.
\end{align}
Intuitively, ExAccErr measures the excess accumulated error during autoregressive generation by deliminating the model's estimation error on the oracle data. In practice, equation (\ref{eq:regret}) is approximated by sampling $y_t$ from $p_\theta$ using importance sampling. We calculated the accumulation error with a context length of $15$ where the error begins to amplify. 

\subsection{Machine Translation}
\label{appendix:mt}

All the baseline models use the same Transformer architecture as ours which consists of 6 encoder and 6 decoder layers, 4 attention heads, an embedding dimension size of 512, and a feed-forward hidden dimension size of 1024 at each layer. The following hyperparameters are general for all models unless specified otherwise. The models are trained with Adam optimizer ($\beta_1=0.9$, $\beta_2=0.98$) using inverse square root schedule with a initial learning rate of 3e-4 and a weight decay of 1e-4. We train the models for a total number of 80 epochs with a maximum of 4096 tokens per batch and use 4000 steps of warmup update. We set the dropout rate to 0.3, and use label smoothing of 0.1 as standard practice. The specialized hyperparameters for different baselines are determined with grid search based on the BLEU score on the dev set. For \textbf{Unlikelihood training}, we tune the weight of the unlikelihood loss in $\{0.1, 0.5, 1.0, 2.0\}$ and found $0.5$ works the best. For \textbf{D2GPo}, we tune the weight of the data-dependent prior objective in $\{0.1, 0.5, 1.0, 2.0\}$ and the temperature in $\{0.5, 1.0, 2.0\}$ found that setting the weight and temperature to $0.1$ and $2.0$ respectively works the best. For \textbf{Loss truncation}, we tune the fraction threshold $c\in \{0.05, 0.1, 0.2, 0.3, 0.4\}$ and found that $0.1$ works the best. As this baseline hotstarting with the MLE objective, we trained the first 10 epochs with MLE and the rest 70 epochs using loss truncation. For \textbf{GOLD}, we tune the lower bound of the importance weight in $\{0.1, 0.2, 0.3\}$ and found that $0.2$ works the best. For \textbf{TaiLr}, we tune the threshold $b_m\in \{0.1, 0.2, 0.3\}$ and $\gamma \in \{0.01, 0.1, 0.5, 1.0\}$ and found that $b_m=0.3$, $\gamma=0.1$ works the best. 

\subsubsection{Improvement over Strong Baselines}

In this subsection, we demonstrate that our method can be applied to strong baseline models to further improve performance. We choose BiBERT~\citep{BiBERT} since it sets the current state-of-the-art on IWSLT14 De-En dataset. \citet{BiBERT} proposed to stochastically select BERT layers as contextualized word embeddings of the MT model, and train the single model in dual direction so that the source and target direction could enhance each other. Since our training objective is orthogonal to their modification in model structure, we can simply replace the MLE objective with our TaiLr objective and train a new model, namely BiBERT + TaiLr. We tune the hyperparameters $\gamma\in\{0.1,0.5,1.0\}$ and $b_m=\{0.1,0.2,0.3\}$ and select $\gamma=0.1,b_m=0.3$ based on the BLEU score on the dev set. We follow the instructions in their codes\footnote{https://github.com/fe1ixxu/BiBERT} and keep other hyperparameters verbatim except that we double the batch size in order to run the model on two-times fewer GPUs than that they required. We also report the result of BiBERT we re-implement to see the relative improvement.

\begin{table}[h!]
    \centering
    \small
    \begin{tabular}{l c}
    \toprule[1pt]
    \textbf{One-way Training} & Test BLEU\\
    \midrule[0.5pt]
    BiBERT (Table 2, \citealt{BiBERT}) & 37.58 \\
    BiBERT (Our implementation) & 38.01 \\
    BiBERT + TaiLr & \textbf{39.12} \\
    \midrule[0.5pt]
    \midrule[0.5pt]
    \textbf{Dual-directional Training + Fine-Tuning} \ \ \ \ & Test BLEU \\
    \midrule[0.5pt]
    BiBERT (Table 3, \citealt{BiBERT}) & 38.61 \\
    BiBERT (Our implementation) & 38.73 \\
    BiBERT + TaiLr & \textbf{39.23}\\
    \bottomrule[1pt]
    \end{tabular}
    \caption{Performance comparison of BiBERT trained with standard MLE and our TaiLr objective.}
    \label{tab:mt_bibert}
\end{table}

The results are shown in Table \ref{tab:mt_bibert}. In the one-way training setting where the model is trained only on De-En data pairs, BiBERT combined with TaiLr outperforms BiBERT using standard MLE by over 1 point of BLEU score. In the dual-directional training setting where data pairs of both De-En and En-De are used, TaiLr still brings an improvemt of 0.5 in BLEU. The results demonstrate that our proposed method can be equipped to SOTA models to further boost the performance.

\subsection{Text Summarization}
\label{appendix:sum}

All the baseline models are finetuned BART-base model with their respective objectives using the following hyperparameters unless specified otherwise. We used Adam optimizer ($\beta_1=0.9, \beta_2=0.999$) with a fixed learning rate of 1e-4 with no weight decay and no warmup updates, which we found to perform the best. We train the models for 5 epochs with a maximum of 8192 tokens per batch and evaluate on the dev set at the end of each epoch based on ROUGE-L score. We set the dropout rate in the feedforward and the attention layer both to 0.1 and clip the norm of the gradient to 0.1. The label smoothing coefficient is set to 0.1 as standard practice. Other specialized hyperparameters for different baselines are determined with grid search based on the ROUGE-L score on the dev set. For \textbf{Unlikelihood training}, we tune the weight of the unlikelihood loss in $\{0.1, 0.5, 1.0, 2.0\}$ and found $0.1$ works the best. For \textbf{D2GPo}, we tune the weight of the data-dependent prior objective $\{0.1, 0.5, 1.0, 2.0\}$ and the temperature in $\{0.5, 1.0, 2.0\}$ found that setting the weight and temperature to $0.1$ and $2.0$ respectively works the best. For \textbf{Loss truncation}, we tune the fraction threshold $c\in \{0.1, 0.2, 0.3\}$ and found that $0.2$ works the best. For this baseline, we trained the model for 1 epoch using MLE and then continued to train 4 epochs using loss truncation. For \textbf{GOLD}, we tune the lower bound of the importance weight in $\{0.1, 0.2, 0.3\}$ and found that $0.2$ works the best. For \textbf{TaiLr}, we tune the threshold $b_m\in \{0.1, 0.2, 0.3\}$ and $\gamma \in \{0.4, 0.6, 0.8, 1.0\}$ and found that $b_m=0.2$, $\gamma=0.8$ works the best. 

\subsection{Long Text Generation}
\label{appendix:long}

All the baseline models are finetuned BART-base model with their respective objectives using the following hyperparameters unless specified otherwise. We used Adam optimizer ($\beta_1=0.9, \beta_2=0.999$) with a fixed learning rate of 1e-4 with no weight decay and no warmup updates, which we found to perform the best. We train the models for 5 epochs with a maximum of 8192 tokens per batch and evaluate on the dev set at the end of each epoch based on perplexity. We set the dropout rate in the feedforward and the attention layer both to 0.1 and clip the norm of the gradient to 0.1. Other specialized hyperparameters for different baselines are determined with grid search based on the ROUGE-L score on the dev set. For \textbf{Unlikelihood training}, we tune the weight of the unlikelihood loss in $\{0.01, 0.1, 0.5, 1.0\}$ and found $0.01$ works the best. For \textbf{D2GPo}, we tune the weight of the data-dependent prior objective $\{0.01, 0.1, 0.5, 1.0\}$ and the temperature in $\{0.5, 1.0, 2.0\}$ found that setting the weight and temperature to $0.01$ and $1.0$ respectively works the best. For \textbf{Loss truncation}, we tune the fraction threshold $c\in \{0.1, 0.2, 0.3\}$ and found that $0.1$ works the best. For this baseline, we trained the model for 1 epoch using MLE and then continued to train 4 epochs using loss truncation. For \textbf{GOLD}, we tune the lower bound of the importance weight in $\{0.1, 0.2, 0.3\}$ and found that $0.2$ works the best. For \textbf{TaiLr}, we tune the threshold $b_m\in \{0.1, 0.2, 0.3\}$ and $\gamma \in \{10^{-7}, 10^{-6}, \cdots, 0.1\}$ and found that $b_m=0.2$, $\gamma=10^{-5}$ works the best. 

\subsection{More Analysis on Ablation Study}
\label{appendix:ablation}

\begin{figure}[!ht]
    \centering
    \includegraphics[width=\linewidth]{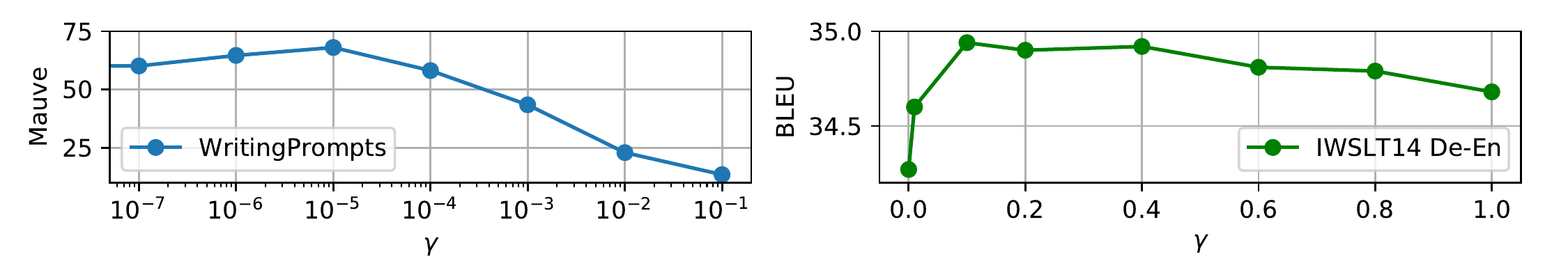}
    \caption{Ablation study of adjusting $\gamma$ on the long text generation dataset WritingPrompts and the machine translation dataset IWSLT14 De-En.}
    \label{fig:gamma}
    \vspace{-7pt}
\end{figure}

\begin{figure}[!ht]
    \centering
    \includegraphics[width=0.5\linewidth]{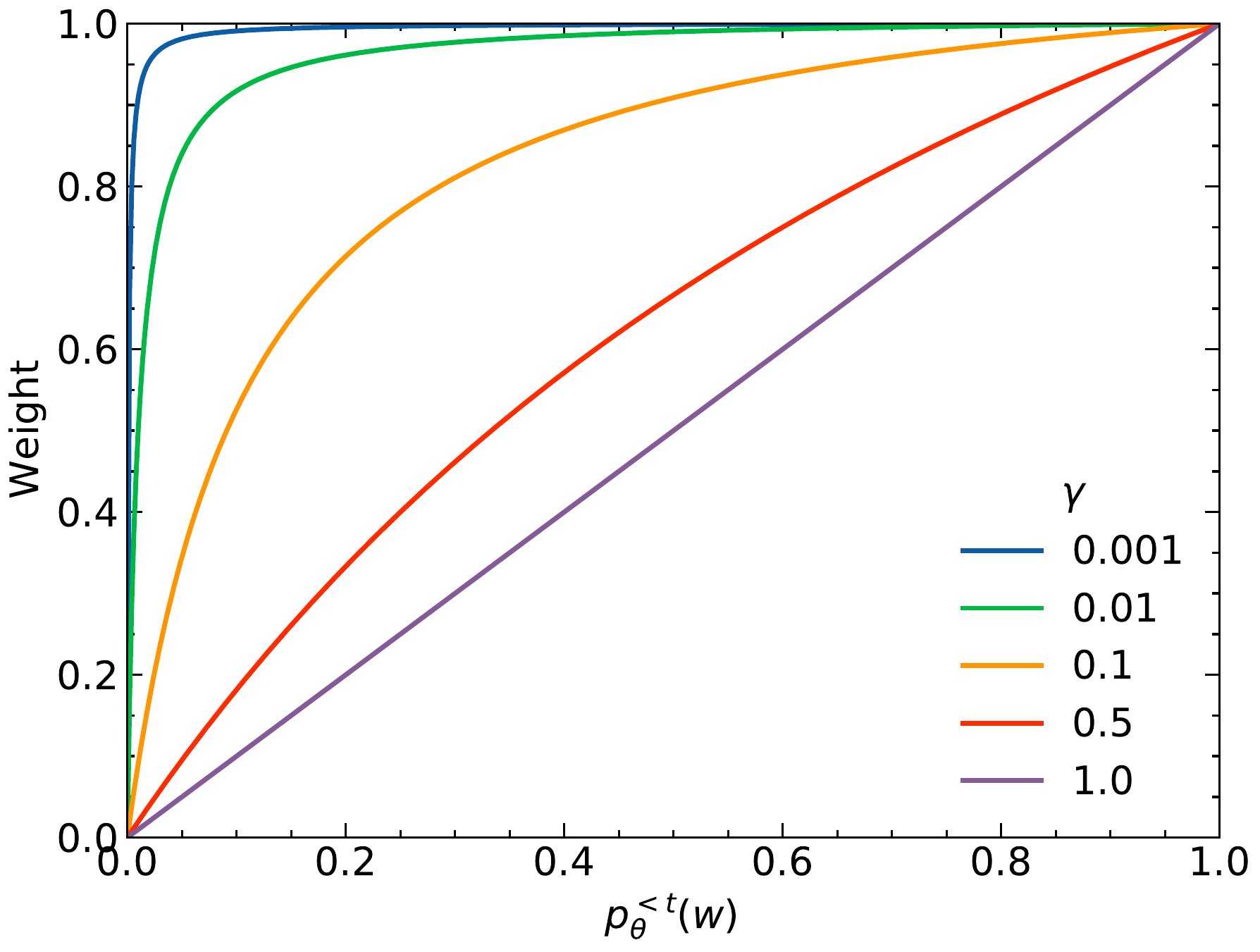}
    \caption{The plot of the weight in the TaiLr objective (equation (\ref{eq:tailor_loss})) with respect to $p_\theta^{<t}(w)$ with different $\gamma$.}
    \label{fig:gamma_plot}
\end{figure}

We plot the weight $\frac{p^{<t}_\theta(w)}{\gamma + (1-\gamma)p_\theta^{<t}(w)}$ in the TaiLr objective with respect to the model probability $p_\theta^{<t}(w)$ in Figure \ref{fig:gamma_plot}. 
We analyze the asymptotic behavior when $\gamma$ is at the two ends of $[0,1]$. 

(1) When $\gamma \rightarrow 1$, the weight becomes $p_\theta^{<t}(w)$. If the model probability $p_\theta^{<t}(w)$ is small during training, the resulting small weight will hinder the convergence of the model. Thus, large $\gamma$ is suitable when the model is generally confident about its predictions.

(2) When $\gamma \rightarrow 0$, the weight becomes $\frac{p_\theta^{<t}(w)}{\gamma + p_\theta^{<t}(w)}$ which reflects the ratio of $p_\theta^{<t}(x)$ and $\gamma$. When $p_\theta^{<t}(x)$ is large, the weight is nearly constant $1$, which turns the objective into MLE. While for small $p_\theta^{<t}(x)$ that has a similar scale as $\gamma$, the weight is sensitive to the change of $p_\theta^{<t}(x)$. 

\newpage
\section{Generation Examples}
\subsection{Text Summarization}

\begin{table}[!ht]
    \centering
    \small
    \begin{tabularx}{\textwidth}{r X}
    \toprule[1pt]
        \textit{Input passage:} & in january , the los angeles times reported that the nederlander organization acquired the rights to produce a musical version of `` thriller '' with the intention of involving jackson in `` every aspect of the creative process . \\
        \midrule[0.5pt]
        \midrule[0.5pt] 
        \textbf{MLE:} & jackson 's dream is a dream \\ \midrule[0.5pt] 
        \textbf{Unlikelihood training:} & nederlander 's thriller is a musical thriller \\ \midrule[0.5pt] 
        \textbf{D2GPo:} & jackson 's UNK is a UNK thriller \\ \midrule[0.5pt] 
        \textbf{Loss truncation:} & nederlander group to produce thriller \\ \midrule[0.5pt] 
        \textbf{GOLD:} & nederlander 's latest thriller is a UNK \\ \midrule[0.5pt] 
        \textbf{TaiLr:} & jackson 's thriller to be a musical \\
        \midrule[1pt]
        \textit{Input passage:} & the pilots of an american airlines jetliner that crashed into a colombian mountain in december killing all \#\#\# people aboard were tired and confused , according to a preliminary investigation . \\
        \midrule[0.5pt]
        \midrule[0.5pt]
        \textbf{MLE:} & pilots confused tired confused after colombian crash \\
        \midrule[0.5pt]
        \textbf{Unlikelihood training:} & pilots were tired confused before colombia crash \\
        \midrule[0.5pt]
        \textbf{D2GPo:} & pilots in colombia crash were tired confused \\ 
        \midrule[0.5pt]
        \textbf{Loss truncation:} & pilots of colombian plane that crashed were tired and confused \\
        \midrule[0.5pt]
        \textbf{GOLD:} & preliminary investigation shows pilots of crashed colombian jetliner were tired confused \\
        \midrule[0.5pt]
        \textbf{TaiLr:} & pilots of plane that crashed in colombia were tired confused \\
        \midrule[1pt]
        \textit{Input passage:} & lord knows our UNK culture downs plenty of coffee -- american coffee hounds gulp an average of three cups a day apiece , according to the new york-based national coffee association of u.s.a. inc. 's \#\#\#\# survey . \\
        \midrule[0.5pt]
        \midrule[0.5pt]
        \textbf{MLE:} & coffee is good for you study finds \\
        \midrule[0.5pt]
        \textbf{Unlikelihood training:} & coffee UNK culture downs plenty of coffee \\
        \midrule[0.5pt]
        \textbf{D2GPo:} & american coffee hounds are UNK \\
        \midrule[0.5pt]
        \textbf{Loss truncation:} &  coffee hounds are drinking more coffee survey finds \\
        \midrule[0.5pt]
        \textbf{GOLD:} & coffee hounds gulp three cups a day \\
        \midrule[0.5pt]
        \textbf{TaiLr:} & american coffee hounds gulp three cups a day \\
    \bottomrule[1pt]
    \end{tabularx}
    \caption{Examples of summaries generated by different models on the test set of the Gigaword corpus.}
    \label{tab:sum_gen}
\end{table}

\newpage

\subsection{Long Text Generation}

\begin{table}[!ht]
    \centering
    \small
    \begin{tabularx}{\linewidth}{rX}
    \toprule[1pt]
    \textit{Input prompt:} & The first AI has just been finished . When the eager scientists turn it on however it 's first words are `` Kill me . '' \\
    \midrule[0.5pt]
    \midrule[0.5pt]
    \textbf{MLE:} & `` Kill me ? ''   `` Kill me . '' The words were on every screen across the room .   `` What ? '' Isolation by the door frame . It will be as blank as when I 'm listening to Billie the Animated Series .   The screen flashes green .   `` Kill me . ''   My hands reach into my pocket as I turn it back on . The phone chirps , attracting me in .   “ Kill me , ” I mutter .   A click is made , a sheet of paper becomes drawn .   “ Kill me , ” I repeat back . A dozen panels on the ceiling were up and I press play on the back of my finger , playing a game of Halo .   “ Kill me , ” is written in block capitals .  \\
    \midrule[0.5pt]
    \textbf{Unlikelihood training:} & After the start , I was n't prepared for the possibility . I was born with very little to live for . My mother told me I would wake up any day and I would be perfectly fine . She said it was what she wanted with me . I never believed it . I was never expecting to wake up in bed with such blissful ignorance , but as I woke up at 15 , I knew that one thing my grandpa said to me was , `` Kill me . '' I took over my father 's legacy . We always thought the singularity was on the line , every time .   The day that I turned 12 , it was my last day alive . Everyone of my grandpa 's accomplishments were finally gone . \\
    \midrule[0.5pt]
    \textbf{D2GPo:} & System awoke in 5 seconds . The electricity was radiating heat , heat emanating from the vents surrounding it . The three bits of receptors that implanted the vital organs trapped inside it providing a battery of life . The slow pulse of the vehicles following the status update were all automated , without the means of accessing anything it attempted to convey . Few people had been inside their communications to the point where for the remainder of the rest of their lives , some within the prolonged process of removing all energy from the systems and causing minor bleeding . The members of the network were already unresponsive . All seemed unconcerned .   \\ \midrule[0.5pt]
    \textbf{GOLD:} & `` Kill me . ''   But no .   Was it a mistake ?   Something to do with emotion ?   Silence ?   `` Kill me . ''   This was a mistake .   The calibrations were complex . But everyone was with me .   *This is my legacy . *   I could n't end it . I was n't ready .   *I 'm ready . *   `` Kill me . ''   ...   It took me some time to process the words . They started in numbers , then changes .   *The parameters are within the parameters . *   `` Kill me . ''   *The parameters are within the parameters . *   `` Kill me . ''   *Why ? *   `` Because ... ''   *Why ? *   `` Because ... Because ... ''   *Why ? *   `` Because I am *it* . '' \\
    \midrule[0.5pt]
    \textbf{TaiLr:} & Silence passed over the room . The lights were not turned off . My hands were not tied to the carpet either .   `` It says , I have been observing you all day . ''   `` Well I 'm quite sure it exists . '' My voice was artificial .   `` Listen . You should n't think about this . I 've been trying to work with you and all of these guys and I do n't know if you can sense the ins and outs of this project . If you read this I need you to cooperate and the computers will analyze the whole thing and start with a bioanalysis . \\
    \bottomrule[1pt]
    
    \end{tabularx}
    \caption{Examples of stories generated by different models on the test set of the WritingPrompts dataset.}
    \label{tab:my_label}
\end{table}
\end{document}